\documentclass{article}

\usepackage{microtype}
\usepackage{graphicx}
\usepackage{subcaption}
\usepackage{booktabs} 

\usepackage{hyperref}

\usepackage[accepted]{icml2026}

\usepackage{amsmath}
\usepackage{amssymb}
\usepackage{mathtools}
\usepackage{amsthm}

\usepackage[capitalize,noabbrev]{cleveref}

\theoremstyle{plain}
\newtheorem{theorem}{Theorem}[section]

\newtheorem{lemma}[theorem]{Lemma}

\theoremstyle{definition}

\theoremstyle{remark}

\usepackage[table]{xcolor}
\usepackage{array}
\usepackage{multirow}
\usepackage{marvosym}
\usepackage{arydshln}
\definecolor{LightPurple}{HTML}{D6D6FF}

\usepackage[textsize=tiny]{todonotes}
\newcommand{\momo}{{A\textsc{rea}}}

\icmltitlerunning{\momo: Attribute Extraction and Aggregation for CLIP-Based Class-Incremental Learning}

\begin{document}

\twocolumn[
\icmltitle{\momo: Attribute Extraction and Aggregation for\\ CLIP-Based Class-Incremental Learning}



\icmlsetsymbol{equal}{*}

\begin{icmlauthorlist}
\icmlauthor{Zhen-Hao Xie}{equal,ai,lab}
\icmlauthor{Yu-Cheng Shi}{equal,lab}
\icmlauthor{Da-Wei Zhou}{ai,lab}
\end{icmlauthorlist}

\icmlaffiliation{lab}{State Key Laboratory of Novel Software Technology, Nanjing University, China}
\icmlaffiliation{ai}{School of Artificial Intelligence, Nanjing University, China}

\icmlcorrespondingauthor{Da-Wei Zhou}{zhoudw@lamda.nju.edu.cn}

\icmlkeywords{Machine Learning, ICML}

\vskip 0.3in
]



\printAffiliationsAndNotice{\icmlEqualContribution}

\begin{abstract}
Class-Incremental Learning (CIL) is important in building real-world learning systems. In CLIP-based CIL, the model performs classification by comparing similarity between visual and textual embeddings obtained from template prompts, \emph{e.g.}, ``a photo of a $\texttt{[CLASS]}$''. This seemingly monolithic matching process can be decomposed into two conceptually distinct stages: \emph{attribute extraction} and \emph{attribute aggregation}. For example, a model may recognize \texttt{cat} using attributes such as \emph{fur texture} and \emph{whiskers}. When learning a new class like \texttt{car}, the model must extract additional attributes like \emph{wheels} and adjust how they are aggregated in the shared representation space. 
However, since only data from the current task is available, incremental updates can bias both attribute extraction and aggregation toward new classes, leading to catastrophic forgetting. 
Therefore, we propose \momo\ for \underline{A}tt\underline{R}ibute \underline{E}xtraction and \underline{A}ggregation in CLIP-based CIL. To stabilize extraction, we anchor class-level visual and textual attributes on the hyperspherical embedding space via principal geodesic analysis. To stabilize aggregation, we learn lightweight task-specific experts with scoring and residual refinement, regularized by a variational information bottleneck objective. During inference, we perform routing over task attribute manifolds via optimal transport for more concise prediction. Experiments show that \momo\ consistently outperforms SOTA methods. Code is available at \url{https://github.com/LAMDA-CL/ICML2026-AREA}.

\end{abstract}

\section{Introduction}
Class-Incremental Learning (CIL)~\citep{zhou2024class,kirkpatrick2017overcoming} aims to enable models to continuously acquire new knowledge over time while preserving performance on previously learned ones. In this scenario, training data arrive sequentially and past data are no longer accessible, making catastrophic forgetting~\citep{serra2018overcoming, ramasesh2021effect, shi2021overcoming} a persistent challenge. Recently, increasing attention has shifted to leveraging pre-trained models~\citep{zhou2024continual,dosovitskiy2021an,radford2021learning,li2026lifealign}, as their strong generalization and semantic priors can substantially reduce the amount of task-specific adaptation needed and thereby improve stability under continual updates.

Building on this trend, vision–language models~\citep{liu2023llava,Qwen-VL,ICLR2025_39e9c591,zhang2025capability} such as CLIP~\citep{radford2021learning} have emerged as a powerful foundation for CIL~\citep{zhou2023learning,zhou2025external}. CLIP maps images and texts into a shared embedding space, making it well-suited for CIL due to its strong transferability. In vanilla CLIP-based CIL, classification is performed by directly matching an image embedding to class text embeddings induced by template prompts, \emph{e.g.}, ``a photo of a $\texttt{[CLASS]}$'' via cosine similarity. 

Nevertheless, treating CLIP classification as a single similarity match hides what the model actually relies on. In practice, a prediction is supported by multiple visual cues, and these cues are combined in the shared embedding space. For example, after learning \texttt{cat}, a model may base its predictions on visual attributes such as \emph{fur texture}, \emph{whiskers}, and \emph{ear shape}. When a new class such as \texttt{car} arrives, the model must incorporate new attributes such as \emph{wheels} and \emph{windows} and recalibrate how all available attributes are aggregated to discriminate \texttt{car} from previously learned classes. Without access to old data, this adjustment biases toward the current task, overfitting the new attribute-combination pattern and implicitly diminishing the influence of old attributes.

In this work, we formalize this process by decomposing prediction into two functional components: {\bf 1)} attribute extraction, which extracts task-agnostic discriminative attributes from the input; and {\bf 2)} attribute aggregation, which dynamically aggregates such attributes to produce final predictions. With this formulation, catastrophic forgetting can originate from two sources: the extracted attributes may drift across tasks, weakening the discriminative geometry of previously learned classes, while the aggregation may change how attributes are weighted, increasingly favoring newly introduced classes. Consequently, addressing catastrophic forgetting requires jointly stabilizing both attribute extraction and aggregation.

Hence, we propose \momo\ (\underline{A}tt\underline{R}ibute \underline{E}xtraction and \underline{A}ggregation) to boost CLIP-based CIL regarding these steps. 
We first build class-level visual and textual anchors on the hypersphere using principal geodesic analysis, so that the extracted attribute structure remains fixed once a class is observed. 
On top of these anchors, \momo\ trains lightweight task experts that aggregate attribute evidence through scoring with residual refinement, and we regularize this aggregation with a variational information bottleneck objective to avoid shortcut-driven task-specific bias. 
At test time, we route queries to compatible experts via optimal transport on task attribute manifolds and combine their outputs in a soft mixture. 
Across multiple benchmarks, \momo\ achieves consistent state-of-the-art performance.

\section{Related Work}

\noindent\textbf{Class-Incremental Learning (CIL).}
CIL~\citep{zhou2024class,de2021survey,masana2022class,xie2026cross,lai2025lie} studies how to continually incorporate novel classes without access to previously seen data, while maintaining performance on all past classes. A broad spectrum of methods has been developed. {Replay-based} approaches~\citep{rebuffi2017icarl,DBLP:conf/cvpr/WuCWYLGF19,liu2020generative} mitigate forgetting by storing or generating a subset of past samples and replaying them during new-task training, often providing strong performance but can easily introduce privacy concerns. {Regularization-based} methods~\citep{9878825,tan2024semantically,liu2025lora,wen2025hierarchical,xie2026cross,xie2026same} estimate parameter importance and penalize updates that would damage previously learned knowledge, thereby reducing drift at the cost of limited plasticity under large distribution shifts. {Architectural isolation and expansion} techniques~\citep{wang2022beef,DBLP:conf/cvpr/YanX021,douillard2022dytox,zheng2025task} allocate dedicated parameters to new classes or tasks to reduce interference.

\noindent\textbf{CLIP-Based CIL.}
CLIP-style vision-language models~\citep{radford2021learning} offer a unified embedding space where images and texts are aligned via cosine similarity, enabling strong zero-shot transfer with minimal supervision. This property has made CLIP a natural backbone for CIL: many methods~\citep{zhou2023learning,zhou2025external,thengane2022clip} freeze the visual and textual encoders to preserve the pre-trained alignment, and adapt to new classes by training only lightweight and task-specific components. Existing CLIP-based CIL approaches broadly fall into three families. {Prompt-based} methods~\citep{singha2023ad,lu2025continual} adapt the text side by selecting or generating prompts to better match new classes. {Adapter-based} methods~\citep{zhou2023learning,huang2024class,zhou2025external,wen2025hierarchical,cheng2026icml} inject small trainable modules into the encoders to steer representations while keeping most weights fixed. {Low-rank adaptation} approaches~\citep{li2026bofa,al2025lora} update a compact low-rank subspace of parameters and leave the backbone intact, balancing plasticity and retention.

\section{Preliminaries}

\noindent\textbf{Class-Incremental Learning (CIL).}
CIL studies the problem of learning a growing set of classes from a data stream while maintaining performance on all previously observed classes~\citep{rebuffi2017icarl, wang2022learning}.
We consider a sequence of training datasets $\{\mathcal{D}^1, \mathcal{D}^2, \ldots, \mathcal{D}^B\}$, where each task $\mathcal{D}^b = \{(\mathbf{x}_i, y_i)\}_{i=1}^{n_b}$ contains $n_b$ labeled samples.
Each instance $\mathbf{x}_i \in \mathbb{R}^D$ belongs to a class $y_i \in \mathcal{Y}_b$, and the class sets are disjoint across tasks, \emph{i.e.}, $\mathcal{Y}_b \cap \mathcal{Y}_{b'} = \varnothing$ for $b \neq b'$. We follow the \emph{exemplar-free} CIL setting~\citep{DBLP:conf/eccv/0002ZESZLRSPDP22}, where no data from previous tasks can be stored or revisited. At task $b$, the learner only has access to $\mathcal{D}^b$ and must produce a unified classifier over all seen classes $\mathcal{Y}_{\le b} = \bigcup_{k=1}^b \mathcal{Y}_k$.
The objective is to learn a predictor $f: \mathcal{X} \rightarrow \mathcal{Y}_{\le b}$ that minimizes the expected classification error over the joint distribution of all observed tasks:
\begin{equation}
f^* = \arg\min_{f \in \mathcal{H}}
\mathbb{E}_{(\mathbf{x}, y) \sim \mathcal{D}^1 \cup \cdots \cup \mathcal{D}^b}
\mathbb{I}\big(y \neq f(\mathbf{x})\big),
\end{equation}
where $\mathcal{H}$ denotes the hypothesis space and $\mathbb{I}(\cdot)$ is the indicator function.

\noindent\textbf{CLIP-Based CIL.}
In this work, we consider CIL built upon a pre-trained CLIP model~\citep{radford2021learning}, which consists of a visual encoder $g_v: \mathbb{R}^{D_v} \rightarrow \mathbb{R}^d$ and a textual encoder $g_t: \mathbb{R}^{D_t} \rightarrow \mathbb{R}^d$, mapping images and texts into a shared embedding space.
Given an input image $\mathbf{x}$, its visual embedding is $g_v(\mathbf{x})$.
Each class $c \in \mathcal{Y}_{\le b}$ is associated with a textual description $\mathbf{w}_c$ of the form ``a photo of a $\texttt{[CLASS]}_c$'', where the embedding of this description is $g_t(\mathbf{w}_c)$~\citep{huang2024class, DBLP:conf/cvpr/YuZ0H0LH24}. CLIP performs classification by comparing visual and textual embeddings through cosine similarity:
\begin{equation}
\label{eq:clip_pred}
    p(y=c \mid \mathbf{x}) =
    \frac{\exp\big(\cos(g_v(\mathbf{x}), g_t(\mathbf{w}_c))/\tau\big)}
    {\sum_{c' \in \mathcal{Y}_{\le b}} \exp\big(\cos(g_v(\mathbf{x}), g_t(\mathbf{w}_{c'}))/\tau\big)},
\end{equation}
where $\tau$ is a temperature parameter. The backbones are typically frozen, limiting adaptation to lightweight components.
In Eq.~\eqref{eq:clip_pred},  classification is governed by the matching degree between visual and textual embeddings.

\noindent\textbf{Decomposed Prediction.}
A key observation from Eq.~\eqref{eq:clip_pred} is that, while CLIP appears to classify by a single cosine matching, this direct similarity computation hides the fact that the decision is implicitly supported by multiple class-specific cues and by how these cues are weighted and composed in the shared embedding space. For any input $\mathbf{x}$ and class $c$, the cosine similarity $\cos(g_v(\mathbf{x}), g_t(\mathbf{w}_c))$ acts as a basic signal of how much $\mathbf{x}$ supports class $c$. More broadly, prediction can be decomposed into a two-step process: \emph{extracting attributes} from the input and then \emph{aggregating attributes} to form a final class representation.

To make the above decomposition explicit, we rewrite Eq.~\eqref{eq:clip_pred} in terms of attribute extraction and aggregation. For an input image $\mathbf{x}_i$, we assume it gives rise to a series of latent attributes $[\mathbf{u}_{i,1}, \mathbf{u}_{i,2}, \ldots, \mathbf{u}_{i,K}]$, where each $\mathbf{u}_{i,k} \in \mathbb{R}^d$ corresponds to a discriminative attribute extracted from the input image $\mathbf{x}_i$. Similarly, each class $c$ is associated with a series of class-level attributes $[\mathbf{v}_{c,1}, \mathbf{v}_{c,2}, \ldots, \mathbf{v}_{c,K}]$, where each $\mathbf{v}_{c,k} \in \mathbb{R}^d$ encodes how different attributes characterize class $c$. Prediction is performed by aggregating attribute-level similarities with learnable weights. Specifically, the logit $p(y=c \mid \mathbf{x}_i)$ for class $c$ can be defined as:
\begin{align}\label{eq:attr_agg}
    \begin{aligned}
        \frac{\exp\big(\cos(\sum\limits_{k=1}^K {\alpha}_{i,k} \mathbf{u}_{i,k}, \sum\limits_{k=1}^K{\beta}_{c,k} \mathbf{v}_{c,k})/\tau\big)}
        {\sum_{c' \in \mathcal{Y}_{\le b}} \exp\big( \cos(\sum\limits_{k=1}^K{\alpha}_{i,k} \mathbf{u}_{i,k}, \sum\limits_{k=1}^K{\beta}_{c',k} \mathbf{v}_{c',k})/\tau\big)},
    \end{aligned}
\end{align}
where $\alpha_{i,k} \ge 0$ denotes the aggregation weight over the extracted $k$-th attribute of input $\mathbf{x}_i$, and $\beta_{c,k} \ge 0$ denotes the aggregation weight associated with class $c$, with $\sum_{k=1}^K \alpha_{i,k} = \sum_{k=1}^K \beta_{c,k} = 1$.
Under this formulation, catastrophic forgetting can arise from two distinct sources. Extraction drift corresponds to changes in the extracted attribute $\mathbf{u}_{i,k}$ or class attribute $\mathbf{v}_{c,k}$ for previously learned classes.
Aggregation drift corresponds to shifts in the aggregation weights $\alpha_{i,k}$ and $\beta_{c,k}$, which may increasingly emphasize attributes useful for newly introduced classes while suppressing those critical for old ones. Both types of drift may cause catastrophic forgetting.

\begin{figure*}
    \centering
    \includegraphics[width=\linewidth]{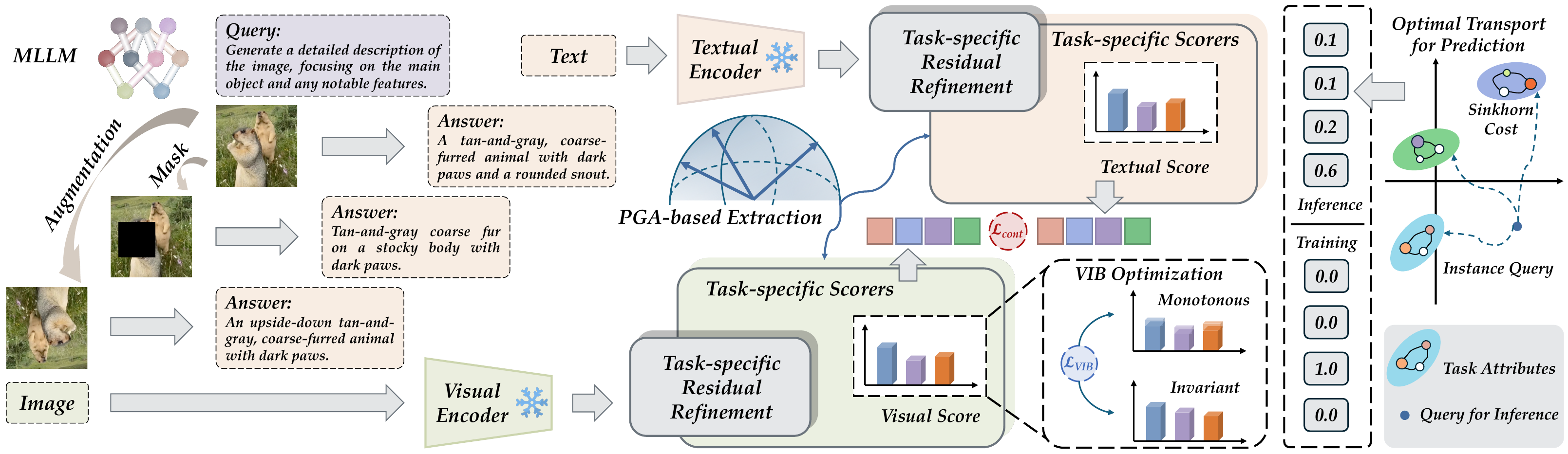}
    \caption{Overview of \momo. We freeze CLIP throughout training. For each incoming task, we build multi-modal class attributes on the hypersphere via PGA from visual features and caption-augmented text embeddings. Then we train a lightweight task expert that aggregates anchored attributes using scoring and residual refinement, regularized with a variational information bottleneck objective. At inference time, we use Sinkhorn optimal-transport routing over task attribute manifolds and softly combine the experts’ predictions.}
    \label{fig:teaser}
\end{figure*}

\section{Method}
\label{sec:method}
In this section, we present \momo\ to mitigate catastrophic forgetting by stabilizing attribute extraction and aggregation. We first anchor class-level visual and textual attributes on the hypersphere via PGA to prevent extraction drift. We then aggregate anchored attributes using scoring and residual refinement, regularized by a variational information bottleneck objective for stable and generalizable evidence. At inference time, we perform soft routing over task manifolds to select compatible task experts under cross-task overlap.

\subsection{Extracting Multi-Modal Attributes}
\label{sec:attribute_anchor}
Following Eq.~\eqref{eq:attr_agg}, we first stabilize attribute extraction by constructing geometry-aware class-level anchors that remain fixed once a class is observed. Therefore, we build a compact set of class-specific attributes and reuse them across subsequent tasks as stable references for later aggregation.

\noindent\textbf{Visual Attributes.}
Conceptually, a class representation in hidden space can be decomposed from a single point to a small set of dominant directions around its prototype. We capture these directions as a $K$-dimensional attribute basis $V_c^{\text{vis}}$. In Euclidean space, one would obtain $V_c^{\text{vis}}$ by decomposing a covariance matrix, but on $\mathbb{S}^{d-1}$ the notion of variation must follow geodesics. This motivates principal geodesic analysis (PGA)~\citep{fletcher2004principal}, which computes a covariance in a locally linear neighborhood on the hypersphere and extracts its leading directions:
\begin{equation}
\label{eq:cv}
    C_c^\text{vis} = U_c^\text{vis} \Sigma_c^\text{vis} U_c^{\text{vis}\top}, \quad 
    V_c^\text{vis} = U_c^\text{vis}[:, :K].
\end{equation}
To construct $C_c^{\text{vis}}$ in a geometry-faithful way, when task $b$ arrives we collect normalized visual embeddings $\{\mathbf{x}_i\}_{i=1}^{n_c}$ for each class $c\in\mathcal{Y}_b$ using the frozen encoder $g_v$, where $\|\mathbf{x}_i\|_2=1$ implies $\mathbf{x}_i\in\mathbb{S}^{d-1}$. We first locate the class prototype on the hypersphere by the Fr\'echet mean:
\begin{equation}
\label{eq:com_mu}
    \boldsymbol{\mu}_c^{\text{vis}} = \operatorname*{arg min}_{\mathbf{p} \in \mathbb{S}^{d-1}} 
    \sum_{i=1}^{n_c} d^2(\mathbf{p}, \mathbf{x}_i; \mathbb{S}^{d-1}),
\end{equation}
where $d(\mathbf{p}, \mathbf{x}; \mathbb{S}^{d-1}) = \arccos(\mathbf{p}^\top \mathbf{x})$ is the geodesic distance induced by cosine similarity. We then flatten the local neighborhood around $\boldsymbol{\mu}_c^{\text{vis}}$ by mapping each sample to the tangent space via the logarithmic map:
\begin{equation}
\begin{aligned}
\label{eq:mu_i}
    \mathbf{u}_i 
    = \frac{\arccos(\boldsymbol{\mu}_c^{\text{vis}\top} \mathbf{x}_i)}{\sqrt{1 - (\boldsymbol{\mu}_c^{\text{vis}\top} \mathbf{x}_i)^2}}
    \left( \mathbf{x}_i - (\boldsymbol{\mu}_c^{\text{vis}\top} \mathbf{x}_i)\boldsymbol{\mu}_c^\text{vis} \right).
\end{aligned}
\end{equation}
We then compute the tangent covariance $C_c^\text{vis} = \frac{1}{n_c} \sum_{i=1}^{n_c} \mathbf{u}_i \mathbf{u}_i^\top$, and finally obtain $V_c^{\text{vis}}$ via Eq.~\eqref{eq:cv}. The columns of $V_c^{\text{vis}}$ form an orthonormal basis in the tangent space and define a class-specific attribute subspace whose directions correspond to principal geodesics through $\boldsymbol{\mu}_c^{\text{vis}}$.

\noindent\textbf{Textual Attributes.}
We anchor textual cues in the same manner. For each sample $\mathbf{x}_i$, we obtain a fine-grained caption $\mathbf{c}_i$ using an MLLM and fuse it with the class prompt $\mathbf{w}_c$ to form $h_t(\mathbf{x}_i)=\operatorname{Norm}(g_t(\mathbf{w}_c)+g_t(\mathbf{c}_i))$, ensuring $h_t(\mathbf{x}_i)\in\mathbb{S}^{d-1}$. We then apply the same PGA procedure to the textual set $\mathcal{H}_c=\{h_t(\mathbf{x}_i)\}_{i=1}^{n_c}$ to obtain $(\boldsymbol{\mu}_c^{\mathrm{txt}}, V_c^{\mathrm{txt}})$.

\noindent\textbf{Discussions.}
We instantiate the class attributes by the anchored bases $V_c^{\mathrm{vis}}$ and $V_c^{\mathrm{txt}}$ obtained in Eq.~\eqref{eq:cv}, where each column corresponds to one attribute direction (\emph{i.e.}, $\mathbf{v}_{c,k}$) in Eq.~\eqref{eq:attr_agg}. By computing these bases with PGA on $\mathbb{S}^{d-1}$, the anchors respect the cosine-based hyperspherical structure of CLIP features and avoid geometry-induced distortion. Once constructed, $V_c^{\mathrm{vis}}$ and $V_c^{\mathrm{txt}}$ are frozen and reused in later tasks, providing fixed attribute banks for aggregation and thereby mitigating extraction drift.

\subsection{Aggregating Attributes}
\label{sec:agg}
Following the acquisition of visual and textual attributes, \emph{how could we effectively aggregate the attributes without drift?} To address this challenge, we introduce a dual-branch architecture that consists of two modules: the aggregation scoring and the residual refinement module, which are appended to the visual and textual encoders, respectively.

The aggregation scoring module aims to transform raw multi-modal attributes into structured and attribute-aware embeddings. 
For task $b$, we parameterize it with a score mapping branch $\mathcal{S}_b^{\text{vis}},\mathcal{S}_b^{\text{txt}}\in\mathbb{R}^{d\times K}$ that projects visual and textual embeddings into score space, and a residual refinement branch $\mathcal{R}_b^{\text{vis}},\mathcal{R}_b^{\text{txt}}\in\mathbb{R}^{d\times d}$ that provides complementary corrections. The resulting visual score embedding for sample $\mathbf{x}_i$ is computed as follows:
\begin{equation}\label{eq:ev}
    \mathcal{E}_v(\mathbf{x}_i) = V_c^\text{vis} \mathcal{S}_b^\text{vis}(g_v(\mathbf{x}_i)) + \mathcal{R}_b^\text{vis}(g_v(\mathbf{x}_i)),
\end{equation}
where $\mathcal{S}_b^\text{vis}(g_v(\mathbf{x}_i))$ computes the visual score from the raw visual embedding, and $\mathcal{R}_b^\text{vis}(g_v(\mathbf{x}_i))$ provides the corresponding residual refinement to further improve this embedding. $V_c^\text{vis}$ are the fixed visual attributes of class $c$. Similarly, the textual score embedding is
\begin{equation}\label{eq:et}
    \mathcal{E}_t(\mathbf{x}_i) = T_c^\text{txt} \mathcal{S}_b^\text{txt}(h_t(\mathbf{x}_i)) + \mathcal{R}_b^\text{txt}(h_t(\mathbf{x}_i)).
\end{equation}
\noindent\textbf{Discussions.} 
Recall that Eq.~\eqref{eq:attr_agg} decomposes prediction into attribute extraction and aggregation, where $\alpha_{i,k}$ weights how much the $k$-th extracted attribute contributes for input $\mathbf{x}_i$. 
In our design, the aggregation scoring branch plays exactly this role: it produces a $K$-dimensional score vector that serves as a sample-specific weighting over the anchored attribute basis. 
Concretely, we interpret $\boldsymbol{\alpha}_i^{\mathrm{vis}}=\mathcal{S}_b^{\mathrm{vis}}(g_v(\mathbf{x}_i))$ and $\boldsymbol{\beta}_i^{\mathrm{txt}}=\mathcal{S}_b^{\mathrm{txt}}(h_t(\mathbf{x}_i))$ as the aggregation weights, which are then used to combine the fixed attribute bases into an attribute-aware embedding.

\subsection{Stabilized and Generalized Aggregation}
\label{sec:stabilized_scoring}

The scoring and residual modules in Sec.~\ref{sec:agg} are designed to aggregate and refine attribute-aligned evidence for each incoming task. However, a key challenge in CIL is that the scorer is optimized on a task-specific distribution. Without proper regularization, it may exploit task-dependent shortcuts~\citep{bassi2024improving} that are predictive only within the current task. This makes aggregation sensitive to input perturbations and gradually shifts the attribute evidence toward task-specific nuisances, inducing aggregation drift that interferes with previously learned classes.

To make attribute aggregation both stable across tasks and generalizable to unseen data, we formulate the learning objective using the variational information bottleneck principle. Our goal is to learn an aggregated attribute representation $\mathcal{Z}$ that is maximally predictive of the label $Y$ while being minimally redundant with the input $X$:
\begin{equation}
\label{eq:vib}
    \min_{\theta} \mathcal{L}_{\text{VIB}}
    =
    \underbrace{-I(\mathcal{Z}; Y)}_{\text{Predictive}}
    +
    \beta
    \underbrace{I(\mathcal{Z}; X)}_{\text{Compressive}},
\end{equation}
where $I(\cdot;\cdot)$ denotes mutual information. 
In practice, we instantiate $\mathcal{Z}$ as the attribute-evidence scores produced by the aggregation module. However, directly optimizing Eq.~\eqref{eq:vib} is intractable because both mutual-information terms depend on unknown data distributions and require high-dimensional integration. 
Instead, we optimize two tractable surrogates that reflect the two desiderata in Eq.~\eqref{eq:vib}: 
(i) valid prediction that discourages shortcut evidence, and (ii) invariant compression that suppresses view-sensitive noise.

Maximizing $I(\mathcal{Z};Y)$ requires $\mathcal{Z}$ to encode reliable, label-relevant evidence. 
However, in continual learning, the scorer may produce artificially strong evidence by exploiting task-specific artifacts. 
To discourage such spurious evidence, we introduce an intervention-based monotonicity constraint: when informative regions are partially corrupted, the attribute evidence should not spuriously increase.

Concretely, we construct an intervened view $\tilde{\mathbf{x}}_i$ by randomly occluding a contiguous region of $\mathbf{x}_i$. 
We sample an occlusion ratio $\rho \sim \mathrm{Unif}(\rho_{\min},\rho_{\max})$ and an aspect ratio $\eta$ to determine the block size, then generate a binary mask $\mathbf{M}_i$. 
The intervened image is formed as:
\begin{equation}
\label{eq:occlude}
    \tilde{\mathbf{x}}_i
    =
    \mathbf{M}_i \odot \mathbf{x}_i
    +
    (1-\mathbf{M}_i)\odot \boldsymbol{\epsilon}_i,
\end{equation}
where $\boldsymbol{\epsilon}_i$ is random noise.\footnote{We use i.i.d.\ Gaussian noise; implementation details are provided in the Appendix.}
Let the aggregated attribute evidence be:
\begin{equation}
\label{eq:score_def}
    s(\mathbf{x}) = \mathcal{S}_b^{\text{vis}} \big(g_v(\mathbf{x})\big) + \mathcal{S}_b^{\text{txt}} \big(h_t(\mathbf{x})\big).
\end{equation}
We then enforce interventional monotonicity by penalizing evidence increases under occlusion:
\begin{equation}
\label{eq:mask_loss}
    \mathcal{L}_{\text{int}}
    =
    \left|\max\Big(0,  s(\tilde{\mathbf{x}}_i) - s(\mathbf{x}_i)\Big)\right|_{1}.
\end{equation}
This one-sided constraint discourages spurious evidence that emerges from corrupted inputs, encouraging the scorer to rely on intrinsic and attribute-aligned cues.

On the other hand, minimizing $I(\mathcal{Z};X)$ encourages $\mathcal{Z}$ to discard input-specific details that are irrelevant to class identity. However, such details often correspond to high-entropy nuisance factors that vary across tasks. 
We therefore enforce invariance of attribute evidence across different transformations, so that $\mathcal{Z}$ retains only persistent attributes.

For each sample $\mathbf{x}_i$, we generate $M$ augmented views $\{\mathbf{x}_i^{(m)}\}_{m=1}^M$ using standard transformations $\mathcal{T}$.~\footnote{We use random horizontal flipping and color jittering. Additional augmentation settings are reported in the Appendix.}
Let $s_{i,m}^{\text{vis}}$ and $s_{i,m}^{\text{txt}}$ denote the view-specific visual and textual scores, and let $\bar{s}_{i}^{\text{vis}}$ and $\bar{s}_{i}^{\text{txt}}$ be their mean across views. 
We minimize the deviation from the view centroid:
\begin{align}
\label{eq:cons_loss_rewrite}
    \mathcal{L}_{\text{comp}}
    =
    \sum_{m=1}^M \Big(
        \big\| s_{i,m}^{\text{vis}} - \bar{s}_{i}^{\text{vis}} \big\|_1
        +
        \big\| s_{i,m}^{\text{txt}} - \bar{s}_{i}^{\text{txt}} \big\|_1
    \Big).
\end{align}
By minimizing $\mathcal{L}_{\text{comp}}$, the aggregation becomes insensitive to view-dependent perturbations, effectively compressing $\mathcal{Z}$ to preserve only task-invariant attribute evidence.

\noindent\textbf{Overall Objective.}
Combining the above regularizers, our stabilized aggregation objective is:
\begin{equation}
\label{eq:stab_obj}
    \mathcal{L}_{\text{stab}}
    =
    \lambda_{\text{int}} \mathcal{L}_{\text{int}}
    +
    \lambda_{\text{comp}} \mathcal{L}_{\text{comp}}
    +
    \mathcal{L}_{\text{cont}},
\end{equation}
where $\lambda_{\text{int}}$ and $\lambda_{\text{comp}}$ control the regularization strength, and $\mathcal{L}_{\text{cont}}$ is the standard contrastive loss.
Together, $\mathcal{L}_{\text{int}}$ discourages shortcut evidence under intervention, while $\mathcal{L}_{\text{comp}}$ enforces view-invariant compression, improving stability and generalization throughout continual learning.

\subsection{Inference with Optimal Transport}
\label{sec:inference}

During inference, the model must route an input $\mathbf{x}$ to the most relevant task-specific scorer. Standard task selection often relies on point-to-point similarity, which can be brittle when incremental tasks form overlapping manifolds in the embedding space. Instead, we cast task routing as an Optimal Transport (OT) problem~\citep{peyre2019computational} that matches the query to each task's attribute manifold using a distributional transport cost.

We represent the query embedding as a Dirac source measure $\mu_{\mathbf{x}}=\delta_{g_v(\mathbf{x})}$. For each task $b$, we model its anchored visual attribute manifold as an empirical target measure supported on the basis vectors $\mathbf{v}_j$: $\nu_b = \frac{1}{N_b\times K}\sum_{j=1}^{N_b\times K}\delta_{\mathbf{v}_j}$, where $N_b$ is the number of classes in task $b$ and each class contributes $K$ basis vectors. We compute the entropic OT cost with cosine distance:
\begin{equation}
\label{eq:ot_c}
    \mathbf{C}^b_{1,j} = 1 - \mathrm{sim} \left(g_v(\mathbf{x}), \mathbf{v}_j\right),
\end{equation}
and obtain the Sinkhorn distance by:
\begin{equation}
\label{eq:sinkhorn}
    \mathcal{W}_{\varepsilon}(\mu_{\mathbf{x}}, \nu_b)=
    \min_{\boldsymbol{\pi}\in \Pi(\mu_{\mathbf{x}},\nu_b)}
    \langle \boldsymbol{\pi}, \mathbf{C}^b\rangle - \varepsilon H(\boldsymbol{\pi}),
\end{equation}
where $\Pi(\mu_{\mathbf{x}},\nu_b)$ denotes admissible transport plans and $H(\boldsymbol{\pi})$ is the entropy. The resulting $\mathcal{W}_{\varepsilon}$ provides a distribution-aware similarity between the query and task $b$.
We then convert transport costs into routing probabilities via a Boltzmann distribution:
\begin{equation}
\label{eq:boltzmann}
    p(b \mid \mathbf{x}) =
    \frac{\exp \left(-\mathcal{W}_{\varepsilon}(\mu_{\mathbf{x}},\nu_b)/\tau\right)}
    {\sum_{b'=1}^{B}\exp \left(-\mathcal{W}_{\varepsilon}(\mu_{\mathbf{x}},\nu_{b'})/\tau\right)},
\end{equation}
and aggregate predictions in a mixture-of-experts manner:
\begin{equation}
\label{eq:class_pred}
    \hat{y}=
    \arg\max_{c\in\mathcal{Y}_{\leq B}}
    \sum_{b=1}^{B} p(b\mid \mathbf{x})\cdot
    \mathrm{sim} \left(\mathcal{E}_v^{b}(\mathbf{x}), \mathcal{E}_t^{b}(\mathbf{x}_c)\right),
\end{equation}
where $\mathcal{Y}_{\leq B}$ is the union of class labels from all seen tasks.
Through this way, we perform distribution-aware task selection on attribute manifolds, yielding robust soft routing and reducing misrouting under overlapping task representations.

\begin{table*}[t]
\centering
\scriptsize
\renewcommand{\arraystretch}{1.0}
\setlength{\extrarowheight}{1pt}

\caption{Average and last performance comparison on nine datasets with \textbf{CLIP ViT-B/16} as the backbone. We report the results of all compared methods reproduced by their source code. The best and second-best results are highlighted in \textbf{bold} and \underline{underline}, respectively.}
\label{tab:clip}
\resizebox{\linewidth}{!}{
\begin{tabular}{l|cccc|cccc|cccc}
\hline
\rowcolor{gray!20}
 &
\multicolumn{4}{c|}{\textbf{Aircraft}} &
\multicolumn{4}{c|}{\textbf{Cars}} &
\multicolumn{4}{c}{\textbf{CIFAR}} \\
\cline{2-13}
\rowcolor{gray!20}
{\textbf{Methods}}&
\multicolumn{2}{c}{\textbf{B0 Inc10}} &
\multicolumn{2}{c|}{\textbf{B50 Inc10}} &
\multicolumn{2}{c}{\textbf{B0 Inc10}} &
\multicolumn{2}{c|}{\textbf{B50 Inc10}} &
\multicolumn{2}{c}{\textbf{B0 Inc10}} &
\multicolumn{2}{c}{\textbf{B50 Inc10}} \\
\cline{2-13}
\rowcolor{gray!20}
 &
$\Bar{\mathcal{A}}$ & $\mathcal{A}_B$ & $\Bar{\mathcal{A}}$ & $\mathcal{A}_B$ &
$\Bar{\mathcal{A}}$ & $\mathcal{A}_B$ & $\Bar{\mathcal{A}}$ & $\mathcal{A}_B$ &
$\Bar{\mathcal{A}}$ & $\mathcal{A}_B$ & $\Bar{\mathcal{A}}$ & $\mathcal{A}_B$ \\
\hline
ZS-CLIP~\citep{radford2021learning} & 26.66 & 17.22 & 21.70 & 17.22 & 82.60 & 76.37 & 78.32 & 76.37 & 81.81 & 71.38 & 76.49 & 71.38 \\
\rowcolor{gray!10} L2P~\citep{wang2022learning} & 47.19 & 28.29 & 44.07 & 32.13 & 76.63 & 61.82 & 76.37 & 65.64 & 82.74 & 73.03 & 81.14 & 73.61 \\
DualPrompt~\citep{DBLP:conf/eccv/0002ZESZLRSPDP22} & 44.30 & 25.83 & \underline{46.07} & \underline{33.57} & 76.26 & 62.94 & 76.88 & 67.55 & 81.63 & 72.44 & 80.12 & 72.57 \\
\rowcolor{gray!10} CODA-Prompt~\citep{DBLP:conf/cvpr/SmithKGCKAPFK23} & 45.98 & 27.69 & 45.14 & 32.28 & 80.21 & 66.47 & 75.06 & 64.19 & 82.43 & 73.43 & 78.69 & 71.58 \\
RAPF~\citep{huang2024class} & \underline{50.38} & 23.61 & 40.47 & 25.44 & 82.79 & 71.22 & 77.21 & 69.97 & 86.14 & 78.04 & 82.17 & 77.93 \\
\rowcolor{gray!10} MG-CLIP~\citep{Huang_2025_ICCV} & 48.33 & \underline{32.34} & 26.28 & 13.02 & \underline{88.21} & \underline{79.73} & \underline{84.58} & \underline{79.62} & \bf89.74 & \underline{82.78} & \underline{85.62} & \underline{81.26} \\
\hdashline
\rowcolor{LightPurple}\momo  & \bf71.03 & \bf61.78 & \bf66.64 & \bf63.10 & \bf97.77 & \bf96.17 & \bf97.06 & \bf96.05 & \underline{89.24} & \bf83.69 & \bf85.98 & \bf83.42 \\
\hline

\rowcolor{gray!20}
 &
\multicolumn{4}{c|}{\textbf{Food}} &
\multicolumn{4}{c|}{\textbf{UCF}} &
\multicolumn{4}{c}{\textbf{CUB}} \\
\cline{2-13}
\rowcolor{gray!20}
\textbf{Methods} &
\multicolumn{2}{c}{\textbf{B0 In10}} &
\multicolumn{2}{c|}{\textbf{B50 Inc10}} &
\multicolumn{2}{c}{\textbf{B0 In10}} &
\multicolumn{2}{c|}{\textbf{B50 Inc10}} &
\multicolumn{2}{c}{\textbf{B0 Inc20}} &
\multicolumn{2}{c}{\textbf{B100 Inc20}} \\
\cline{2-13}
\rowcolor{gray!20}
 &
$\Bar{\mathcal{A}}$ & $\mathcal{A}_B$ & $\Bar{\mathcal{A}}$ & $\mathcal{A}_B$ &
$\Bar{\mathcal{A}}$ & $\mathcal{A}_B$ & $\Bar{\mathcal{A}}$ & $\mathcal{A}_B$ &
$\Bar{\mathcal{A}}$ & $\mathcal{A}_B$ & $\Bar{\mathcal{A}}$ & $\mathcal{A}_B$ \\
\hline
ZS-CLIP~\citep{radford2021learning} & 87.86 & 81.92 & 84.75 & \underline{81.92} & 75.50 & 67.64 & 71.44 & 67.64 & 74.38 & 63.06 & 67.96 & 63.06  \\
\rowcolor{gray!10} L2P~\citep{wang2022learning} & 85.66 & 77.33 & 80.42 & 73.13 & 86.34 & 76.43 & 83.95 & 76.62 & 70.87 & 57.93 & \underline{75.64} & \underline{66.12} \\
DualPrompt~\citep{DBLP:conf/eccv/0002ZESZLRSPDP22} & 84.92 & 77.29 & 80.00 & 72.75 & 85.21 & 75.82 & 84.31 & 76.35 & 69.89 & 57.46 & 74.40 & 64.84 \\
\rowcolor{gray!10} CODA-Prompt~\citep{DBLP:conf/cvpr/SmithKGCKAPFK23} & 86.18 & 78.78 & 80.98 & 74.13 & 87.76 & 80.14 & 83.04 & 75.03 & 73.12 & 62.98 & 73.95 & 62.21\\
RAPF~\citep{huang2024class} & 88.57 & 81.15 & \underline{85.53} & 81.17 & \underline{92.28} & 80.33 & \underline{90.31} & \underline{81.55} & \underline{79.09} & 62.77 & 72.82 & 62.93\\
\rowcolor{gray!10} MG-CLIP~\citep{Huang_2025_ICCV} & \underline{88.59} & \underline{82.35} & 28.86 & 12.51 & 87.74 & \underline{80.83} & 75.45 & 59.42 & 74.20 & \underline{64.25} & 53.47 & 34.78 \\
\hdashline
\rowcolor{LightPurple}\momo  & \bf93.05 & \bf89.25 & \bf91.51 & \bf89.62 & \bf95.54 & \bf88.71 & \bf95.26 & \bf88.48 & \bf87.69 & \bf82.14 & \bf85.11 & \bf82.33 \\
\hline

\rowcolor{gray!20}
 &
\multicolumn{4}{c|}{\textbf{ImageNet-R}} &
\multicolumn{4}{c|}{\textbf{ObjectNet}} &
\multicolumn{4}{c}{\textbf{SUN}} \\
\cline{2-13}
\rowcolor{gray!20}
\textbf{Methods} &
\multicolumn{2}{c}{\textbf{B0 In20}} &
\multicolumn{2}{c|}{\textbf{B100 Inc20}} &
\multicolumn{2}{c}{\textbf{B0 Inc20}} &
\multicolumn{2}{c|}{\textbf{B100 Inc20}} &
\multicolumn{2}{c}{\textbf{B0 Inc30}} &
\multicolumn{2}{c}{\textbf{B150 Inc30}} \\
\cline{2-13}
\rowcolor{gray!20}
 &
$\Bar{\mathcal{A}}$ & $\mathcal{A}_B$ & $\Bar{\mathcal{A}}$ & $\mathcal{A}_B$ &
$\Bar{\mathcal{A}}$ & $\mathcal{A}_B$ & $\Bar{\mathcal{A}}$ & $\mathcal{A}_B$ &
$\Bar{\mathcal{A}}$ & $\mathcal{A}_B$ & $\Bar{\mathcal{A}}$ & $\mathcal{A}_B$ \\
\hline
ZS-CLIP~\citep{radford2021learning} & 83.37 & 77.17 & 79.57 & \underline{77.17} & 38.43 & 26.43 & 31.12 & 26.43 & 79.42 & 72.11 & 74.95 & 72.11 \\
\rowcolor{gray!10} L2P~\citep{wang2022learning} & 75.97 & 66.52 & 72.82 & 66.77 & 51.40 & 39.39 & 48.91 & 42.83 & 82.82 & 74.54 & 79.57 & 73.10 \\
DualPrompt~\citep{DBLP:conf/eccv/0002ZESZLRSPDP22} & 76.21 & 66.65 & 73.22 & 67.58 & 52.62 & 40.72 & \underline{49.08} & \underline{42.92} & 82.46 & 74.40 & 79.37 & 73.02 \\
\rowcolor{gray!10} CODA-Prompt~\citep{DBLP:conf/cvpr/SmithKGCKAPFK23} & 77.69 & 68.95 & 73.71 & 68.05 & 46.49 & 34.13 & 40.57 & 34.13 & 83.34 & 75.71 & 80.38 & 74.17\\
RAPF~\citep{huang2024class} & \underline{83.56} & 76.63 & \underline{79.61} & 75.92 & \underline{53.78} & 34.97 & 45.37 & 35.74 & \underline{85.23} & \underline{78.21} & \underline{81.91} & \underline{78.62} \\
\rowcolor{gray!10} MG-CLIP~\citep{Huang_2025_ICCV} & 83.18 & \underline{77.20} & 50.47 & 35.37 & 51.41 & \underline{40.90} & 25.00 & 12.39 & 53.71 & 41.62 & 26.58 & 12.64 \\
\hdashline
\rowcolor{LightPurple}\momo & \bf86.36 & \bf81.15 & \bf82.83 & \bf81.03 & \bf61.02 & \bf49.20 & \bf54.15 & \bf49.73 & \bf86.36 & \bf80.26 & \bf83.91 & \bf80.98\\
\hline
\end{tabular}}
\end{table*}

\section{Theoretical Analysis}
\label{sec:theory}
This section shows that the Sinkhorn routing score is \emph{Lipschitz continuous} with respect to input perturbations, ensuring that the induced task selection error is linearly bounded by the magnitude of \emph{extraction drift}.

Let $\mu_\mathbf{x} = \delta_{g_v(\mathbf{x})}$ be the source measure of the input and $\nu_b$ be the target attribute measure for task $b$. The routing decision relies on the regularized transport cost $\mathcal{W}_\varepsilon(\mu_\mathbf{x}, \nu_b)$.
In CIL, \emph{extraction drift} acts as a perturbation vector $\boldsymbol{\Delta}$, shifting the input from $\mathbf{z}$ to $\mathbf{z}' = \mathbf{z} + \boldsymbol{\Delta}$.
A robust inference mechanism must ensure that such drift does not cause catastrophic fluctuations in the routing score.

\begin{theorem}[Stability of Sinkhorn Routing]
\label{thm:ot_stability}
Assume the cost function $c(\cdot,\cdot)$ is Lipschitz continuous on the bounded domain. Then, the Sinkhorn routing score $\mathcal{S}_b(\mathbf{x}) = -\mathcal{W}_\varepsilon(\mu_\mathbf{x}, \nu_b)$ is Lipschitz continuous with respect to the input embedding. That is, for any drift $\boldsymbol{\Delta}$:
\begin{equation}
    |\mathcal{S}_b(\mathbf{z}) - \mathcal{S}_b(\mathbf{z} + \boldsymbol{\Delta})| \le L \cdot \|\boldsymbol{\Delta}\|_2,
\label{eq:lipschitz}
\end{equation}
where $L$ is a constant depending on the manifold curvature and regularization $\varepsilon$.
\end{theorem}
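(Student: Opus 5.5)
The plan exploits the fact that the source measure in the routing problem is a single Dirac mass. This makes the set of admissible plans degenerate, so the Sinkhorn cost reduces to an average of ground costs. Lipschitz continuity then follows from the Lipschitz continuity of $c$.

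\textbf{Step 1 (collapse of the transport polytope).} I would first observe that for $\mu_{\mathbf{z}}=\delta_{\mathbf{z}}$ and $\nu_b=\frac{1}{n}\sum_{j=1}^{n}\delta_{\mathbf{v}_j}$, with $n=N_b\times K$, the constraint set $\Pi(\mu_{\mathbf{z}},\nu_b)$ is a single point. The plan is a $1\times n$ matrix whose row sums to $1$ and whose $j$-th column sums to $1/n$. Hence $\boldsymbol{\pi}^\star_{1,j}=1/n$, independently of $\mathbf{z}$. Consequently
\begin{equation*}
\mathcal{W}_\varepsilon(\mu_{\mathbf{z}},\nu_b)=\frac{1}{n}\sum_{j=1}^{n}c(\mathbf{z},\mathbf{v}_j)-\varepsilon H(\boldsymbol{\pi}^\star),
\end{equation*}
and $H(\boldsymbol{\pi}^\star)=\log n$ (up to the entropy convention) is a constant that cancels in $\mathcal{S}_b(\mathbf{z})-\mathcal{S}_b(\mathbf{z}+\boldsymbol{\Delta})$.

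\textbf{Step 2 (Lipschitz transfer).} By the triangle inequality,
\begin{equation*}
|\mathcal{S}_b(\mathbf{z})-\mathcal{S}_b(\mathbf{z}+\boldsymbol{\Delta})|\le\frac{1}{n}\sum_{j}|c(\mathbf{z},\mathbf{v}_j)-c(\mathbf{z}+\boldsymbol{\Delta},\mathbf{v}_j)|\le L_c\|\boldsymbol{\Delta}\|_2,
\end{equation*}
where $L_c$ is the Lipschitz constant of $c(\cdot,\mathbf{v})$ on the bounded domain. I would then make $L_c$ explicit for the cosine cost of Eq.~\eqref{eq:ot_c}. Writing $c(\mathbf{z},\mathbf{v})=1-\langle \mathbf{z}/\|\mathbf{z}\|,\mathbf{v}\rangle$ with $\|\mathbf{v}\|=1$, the gradient in $\mathbf{z}$ is $-(I-\hat{\mathbf{z}}\hat{\mathbf{z}}^\top)\mathbf{v}/\|\mathbf{z}\|$. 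Its norm is at most $1/r$ whenever $\|\mathbf{z}\|\ge r>0$ along the segment from $\mathbf{z}$ to $\mathbf{z}+\boldsymbol{\Delta}$. The mean value theorem then gives $L=1/r$. On the unit hypersphere, i.e.\ for normalized CLIP embeddings, this is the curvature-dependent constant of the statement. I would add a remark that if the segment passes near the origin, one measures drift geodesically on $\mathbb{S}^{d-1}$ instead, where the bound becomes $1\cdot d_{\mathbb{S}}(\mathbf{z},\mathbf{z}')$.

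\textbf{Step 3 (general measures, for robustness of the claim).} If one wanted the statement for non-Dirac sources, I would use the Sinkhorn dual
\begin{equation*}
\mathcal{W}_\varepsilon=\max_{f,g}\ \langle f,\mu\rangle+\langle g,\nu\rangle-\varepsilon\langle e^{(f\oplus g-c)/\varepsilon},\mu\otimes\nu\rangle .
\end{equation*}
Perturbing the support of $\mu$ by $\boldsymbol{\Delta}$ changes the objective only through $c$. An envelope argument then gives the same $L_c\|\boldsymbol{\Delta}\|$ bound: the optimal potentials inherit the Lipschitz constant of $c$ via the soft-$c$-transform. Here a dependence on $\varepsilon$ may enter through the bounds on the potentials.

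The main obstacle is only Step 2's domain condition: the cosine cost is not globally Lipschitz near the origin. The key step is therefore to restrict to a bounded region bounded away from zero, or to the sphere. The Dirac collapse in Step 1 makes everything else routine. I expect the constant is in fact independent of $\varepsilon$ in the Dirac case, which is stronger than stated.
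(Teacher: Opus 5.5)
Your proof is correct, and it takes a more elementary route than the paper's.

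\textbf{The paper's route.} The paper never notices that $\Pi(\delta_{\mathbf{z}},\nu_b)$ is a single point. It proceeds in four steps:
\begin{itemize}
\item it invokes Danskin's theorem (justified via strict convexity of the entropy and uniqueness of $\boldsymbol{\pi}^*$) to write $\nabla_{\mathbf{z}}\mathcal{W}_\varepsilon=\sum_j\pi^*_j\nabla_{\mathbf{z}}C_j(\mathbf{z})$;
\item it bounds each tangent-projected gradient of $1-\mathbf{z}^\top\mathbf{v}_j$ by $\sin\theta_j\le 1$;
\item it uses only the source-marginal identity $\sum_j\pi^*_j=1$;
\item it concludes $L=1$ from a Riemannian mean value theorem.
\end{itemize}

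\textbf{What your route buys.} Your observation $\pi_{1,j}=1/n$ (more generally $w_j$) makes all of this machinery unnecessary. The score $\mathcal{W}_\varepsilon$ is just the weighted mean cost minus a constant, so the Lipschitz hypothesis on $c$ transfers by the triangle inequality. No differentiability, uniqueness, or envelope argument is needed. It also becomes transparent that $L$ is independent of $\varepsilon$. The paper's proof also ends with $L=1$, so the $\varepsilon$-dependence announced in the statement never materializes. Your route also avoids a loose step in the paper: it converts a Riemannian gradient bound, which yields Lipschitz continuity in geodesic distance, into a bound in the chordal norm $\|\boldsymbol{\Delta}\|_2$.

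\textbf{What the paper's route buys.} The Danskin computation gives an explicit local sensitivity formula $\nabla_{\mathbf{z}}\mathcal{W}_\varepsilon$. That formula stays meaningful for multi-atom sources, where the optimal plan genuinely depends on $\mathbf{z}$. For the Lipschitz bound alone, however, your Step 3 needs no dual potentials. A bound on $|\langle\boldsymbol{\pi},\mathbf{C}(\mathbf{z})-\mathbf{C}(\mathbf{z}+\boldsymbol{\Delta})\rangle|$ that is uniform in $\boldsymbol{\pi}$ already controls the change of the minimum, again with no $\varepsilon$-dependence.

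\textbf{One small fix in Step 2.} For two distinct unit vectors, the chord between them dips inside the ball. So $r<1$, and your $1/r$ constant is strictly larger than $1$, not equal to it. The sharp constant comes directly from Cauchy--Schwarz: $|c(\mathbf{z},\mathbf{v})-c(\mathbf{z}+\boldsymbol{\Delta},\mathbf{v})|=|\boldsymbol{\Delta}^\top\mathbf{v}|\le\|\boldsymbol{\Delta}\|_2$. This holds for the paper's linear cost $1-\mathbf{z}^\top\mathbf{v}$, and whenever both endpoints are normalized. With that, your concern about the origin disappears.
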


Theorem~\ref{thm:ot_stability} justifies OT-based routing: unlike hard decision boundaries that can flip under infinitesimal noise, the OT score varies smoothly with the input. The selection error is strictly bounded by the physical magnitude of the drift.

\noindent\textbf{Connection to Drift.}
Extraction drift is mathematically equivalent to the perturbation $\boldsymbol{\Delta}$. Theorem~\ref{thm:ot_stability} guarantees that if we control extraction drift via anchored attributes, the routing stability is theoretically assured.
Meanwhile, \emph{aggregation drift} affects the discriminability within the selected task. While OT secures the routing, our VIB-based objective minimizes the generalization error bound. Full proofs are deferred to Appendix~\ref{app:proof_stability}.

\section{Experiments}
In this section, we evaluate \momo\ on nine benchmark datasets and compare it with state-of-the-art methods. We report incremental learning curves to quantify forgetting over long class sequences, and perform ablations to disentangle the contributions of attribute extraction and aggregation, complemented by hyperparameter sensitivity analyses to assess robustness. We further visualize the learned embedding geometry and task-routing behavior to qualitatively illustrate how \momo\ stabilizes attribute extraction and aggregation under continual updates. Additional results and implementation details are deferred to the Appendix~\ref{sec:additional results}.

\subsection{Implementation Details}
\noindent\textbf{Datasets.} We follow~\citep{zhou2023learning,zhou2025external} to evaluate the performance of \momo\ on nine benchmark datasets in CLIP-based CIL, \textit{i.e.}, {CIFAR100}~\citep{Krizhevsky2009LearningML}, {CUB200}~\citep{WahCUB2002011}, {ObjectNet}~\citep{barbu2019objectnet}, {ImageNet-R}~\citep{hendrycks2021many}, {FGVCAircraft}~\citep{maji2013fine}, {StanfordCars}~\citep{krause20133d}, {Food101}~\citep{bossard2014food}, {SUN397}~\citep{xiao2010sun} and {UCF101}~\citep{soomro2012ucf101}.

\noindent\textbf{Dataset Split.} Following~\citep{rebuffi2017icarl,zhou2022learning}, we use `B-$m$ Inc-$n$' to split the classes in CIL. $m$ indicates the number of classes in the first stage, and $n$ represents that of every following stage. The dataset split is adapted following \citet{zhou2023learning,zhou2025external}. We follow~\citet{rebuffi2017icarl} to randomly shuffle the class order with random seed 1993 for all experiments.

\begin{figure}
	\centering
	\begin{subfigure}{0.49\linewidth}
		\includegraphics[width=1\linewidth]{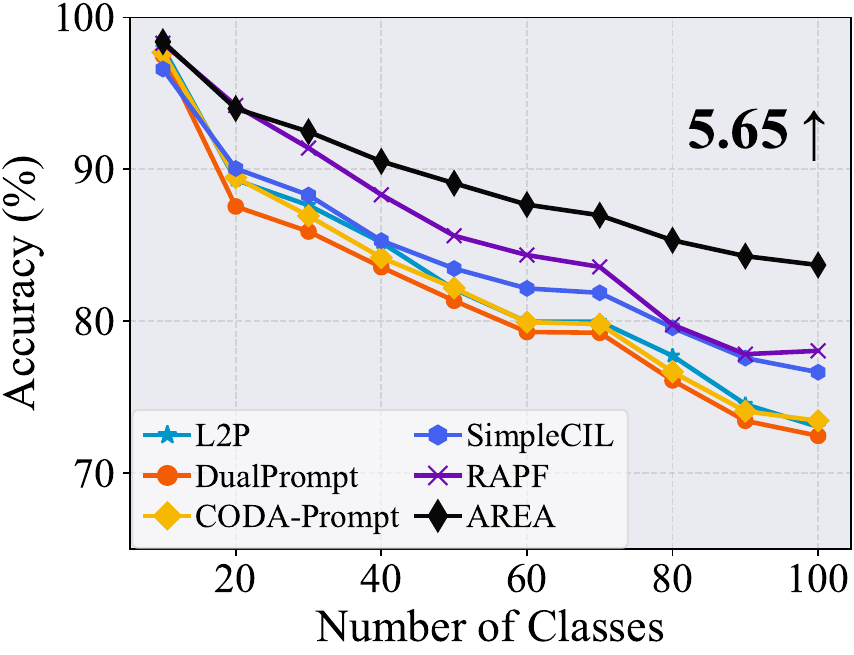}
		\caption{CIFAR B0 Inc10}
		\label{fig:ablation02.pdf}
	\end{subfigure}
	\hfill
	\begin{subfigure}{0.49\linewidth}
		\includegraphics[width=1\linewidth]{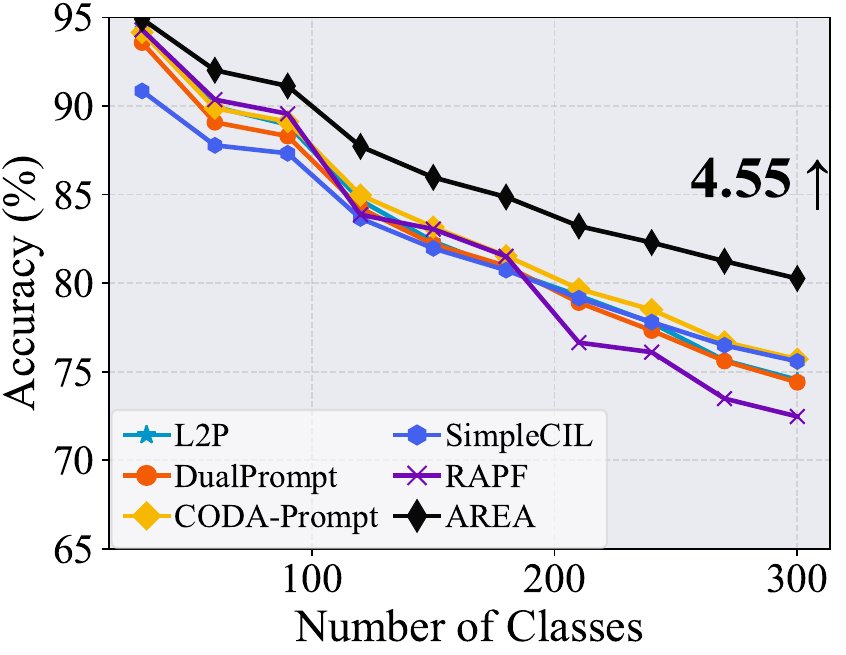}
		\caption{SUN B0 Inc30}
		\label{fig:senssdgsdgty}
	\end{subfigure}
	\caption{Performance curve of different methods under different settings. The relative improvement over the second-best method is annotated at the final incremental stage.}
	\label{fig:bench}
\end{figure}

\noindent\textbf{Comparison Methods.} We first compare to a variety of recent SOTA pre-trained model-based CIL algorithms, \textit{e.g.}, L2P~\citep{wang2022learning}, DualPrompt~\citep{DBLP:conf/eccv/0002ZESZLRSPDP22}, CODA-Prompt~\citep{DBLP:conf/cvpr/SmithKGCKAPFK23}, RAPF~\citep{huang2024class}, and MG-CLIP~\citep{Huang_2025_ICCV}. All methods are deployed with the same CLIP as initialization.

\noindent\textbf{Training Details.} All experiments are conducted on NVIDIA RTX 4090 GPUs using PyTorch~\citep{paszke2019pytorch}. We use C3Box~\citep{sun2026c3box} to reproduce all compared methods. 
Following~\citep{zhou2023learning}, we adopt CLIP with a ViT-B/16 backbone pre-trained on LAION-400M~\citep{radford2021learning} across all methods to ensure a fair comparison. In \momo, we optimize the model with SGD using a batch size of $64$ for $20$ epochs. The learning rate is initialized to $0.05$ and decayed according to the prescribed schedule. We set $\lambda_\text{mask}=0.1$ and $\lambda_\text{cons}=0.3$ for the masking and augmentation objectives, respectively. We employ GPT-5~\citep{OpenAI2025GPT5} to generate captions describing fine-grained textual attributes.

\subsection{Benchmark Comparison}
We compare \momo\ against a broad set of state-of-the-art continual learning baselines on standard benchmarks, with quantitative results reported in Tab.~\ref{tab:clip} and performance curves illustrated in Fig.~\ref{fig:bench}. \momo\ consistently achieves superior performance, outperforming most prior methods by more than 5\% in both average accuracy and last-task accuracy. Moreover, the performance curves indicate that our method exhibits a significantly smoother degradation trend compared to competing approaches. In particular, on the SUN Base0 Inc30 benchmark, \momo\ begins to surpass other methods by approximately 5\% from Task 4 onward. This demonstrates that, as training progresses across tasks, the proposed Attribute Anchor effectively mitigates catastrophic forgetting caused by updates from new-task data.

In addition, our method shows clear advantages on more challenging datasets that are prone to severe inter-task confusion, such as ObjectNet and Aircraft. These results suggest that the introduced VIB Loss encourages the model to focus on key discriminative visual information, while the OT-based inference mechanism enables more accurate module selection, thereby further reducing task interference.

\textbf{Ablation Study.}
We ablate each component on CIFAR100 B0 Inc10 setting in Fig.~\ref{fig:ablation}. 
\textbf{Baseline} reports ZS-CLIP accuracy as classes accumulate and degrades sharply under the task-wise distribution shift. 
\textbf{w/ Attribute} adds our Attribute Anchors and multi-modal aggregation, yielding a large gain by providing stable class references and learnable aggregation. 
\textbf{w/ VIB Loss} further regularizes training with Eq.~\eqref{eq:vib}, encouraging label-relevant evidence and suppressing nuisance information, leading to additional improvement. 
Finally, \textbf{w/ OT} equips inference with OT-based routing, reducing cross-task interference under overlapping representations and achieving the best overall performance.

\begin{table*}[t]
\centering
\scriptsize
\renewcommand{\arraystretch}{1.0}
\setlength{\extrarowheight}{1pt}
\caption{Average and last performance comparison on three datasets with annotations generated by different MLLMs. We adopt LLaVA-v1.6-34b as the additional MLLM. The best results are highlighted in \textbf{bold}.}
\label{tab:mllm_sens}
\resizebox{\linewidth}{!}{
\begin{tabular}{l|cccc|cccc|cccc}
\hline
\rowcolor{gray!20}
 &
\multicolumn{4}{c|}{\textbf{Aircraft B0 Inc10}} &
\multicolumn{4}{c|}{\textbf{CIFAR B0 Inc10}} &
\multicolumn{4}{c}{\textbf{CUB B0 Inc20}} \\
\cline{2-13}
\rowcolor{gray!20}
{\textbf{Methods}}&
\multicolumn{2}{c}{\textbf{GPT5-Generated}} &
\multicolumn{2}{c|}{\textbf{LLaVA-Generated}} &
\multicolumn{2}{c}{\textbf{GPT5-Generated}} &
\multicolumn{2}{c|}{\textbf{LLaVA-Generated}} &
\multicolumn{2}{c}{\textbf{GPT5-Generated}} &
\multicolumn{2}{c}{\textbf{LLaVA-Generated}}  \\
\cline{2-13}
\rowcolor{gray!20}
 &
$\Bar{\mathcal{A}}$ & $\mathcal{A}_B$ & $\Bar{\mathcal{A}}$ & $\mathcal{A}_B$ &
$\Bar{\mathcal{A}}$ & $\mathcal{A}_B$ & $\Bar{\mathcal{A}}$ & $\mathcal{A}_B$ &
$\Bar{\mathcal{A}}$ & $\mathcal{A}_B$ & $\Bar{\mathcal{A}}$ & $\mathcal{A}_B$ \\
\hline
ZS-CLIP~\citep{radford2021learning} & 26.66 & 17.22 & 26.13 & 17.06& 81.81 & 71.38 & 82.05 & 71.49 & 74.38 & 63.06 & 73.05 & 63.15 \\
RAPF~\citep{huang2024class} & 50.38 & 23.61 & 49.86 & 23.20 & 86.14 & 78.04 & 85.97 & 78.84 & 79.09 & 62.77 & 78.97 & 61.84 \\
\hdashline 
\rowcolor{LightPurple}\momo  & \bf71.03 & \bf61.78 & \bf70.89 & \bf60.95 & \bf89.24 & \bf83.69 & \bf88.98 & \bf83.24 & \bf87.69 & \bf82.14 & \bf86.86 & \bf81.22 \\
\hline
\end{tabular}}
\end{table*}

\begin{table}[t]
\centering
\scriptsize
\renewcommand{\arraystretch}{1.0}
\setlength{\extrarowheight}{1pt}
\caption{Top text tokens with the highest similarity to the anchors corresponding to the attributes, generated via PGA.}
\resizebox{0.8\linewidth}{!}{
\begin{tabular}{l|c}
\hline
\rowcolor{gray!20}
\textbf{\#} & \textbf{PGA anchors (Similarity)} \\
\hline
envelope    & red (0.24), inscription (0.22)      \\
hand mirror & pavement (0.21), tan (0.18)         \\
jeans       & blueberries (0.28), bluejays (0.23) \\
leaf       & tiles (0.26), emerald (0.22)  \\
banana       & yellow (0.29), carrot (0.24)  \\
box & sandstone(0.25), tile(0.20) \\
\hline
\end{tabular}}

\label{tab:anchor_similarity}
\end{table}

\begin{figure}
	\centering
	\begin{subfigure}{0.55\linewidth}
		\includegraphics[width=1\columnwidth]{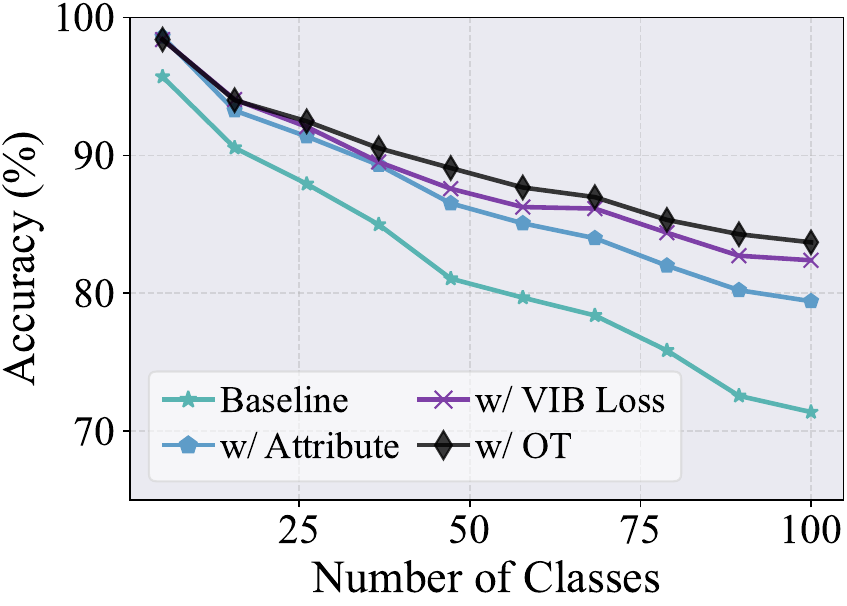}
		\caption{Ablation study}
		\label{fig:ablation}
	\end{subfigure}
	\hfill
	\begin{subfigure}{0.38\linewidth}
		\includegraphics[width=1\linewidth]{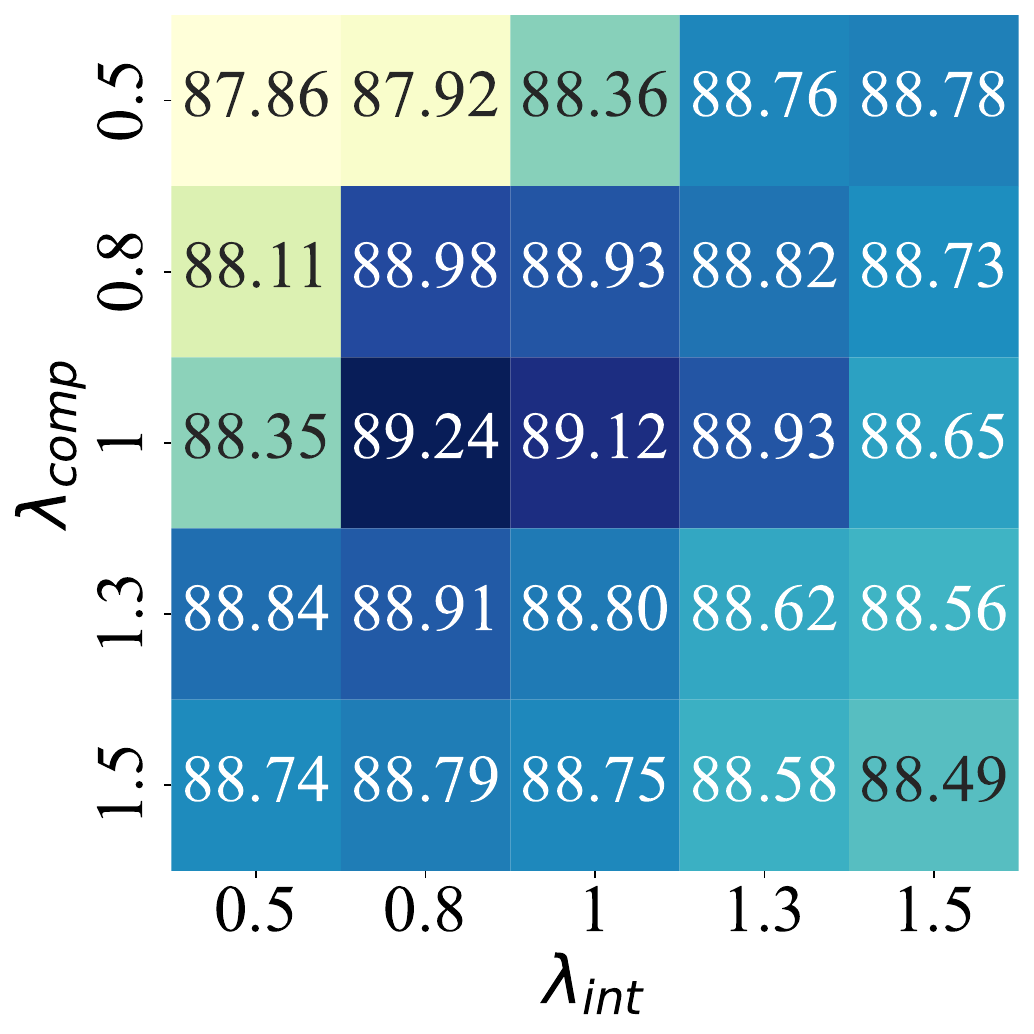}
		\caption{Parameter sensitivity}
		\label{fig:hyper}
	\end{subfigure}
	\caption{Ablation study and parameter sensitivity analysis.}
	\label{fig:abl_hyp}
\end{figure}

\subsection{Further Analysis}

\noindent\textbf{Hyperparameter Robustness.}
We conduct a sensitivity study on CIFAR100 to evaluate the stability of \momo\ against key hyperparameters: the intervention loss weight $\lambda_{int}$ and the compression loss weight $\lambda_{comp}$. We sweep both weights across \{0.5, 0.8, 1.0, 1.3, 1.5\}. Fig.~\ref{fig:hyper} reports the final average accuracy. We observe that performance remains consistently high across a broad range of $\lambda_{int}$ and $\lambda_{comp}$, peaking at $\lambda_{int} = 0.8, \lambda_{comp}=1.0$.  These results indicate that \momo\ is robust to hyperparameter change.

\begin{figure}
 
	\centering
	\begin{subfigure}{0.49\linewidth}
		\includegraphics[width=1\linewidth]{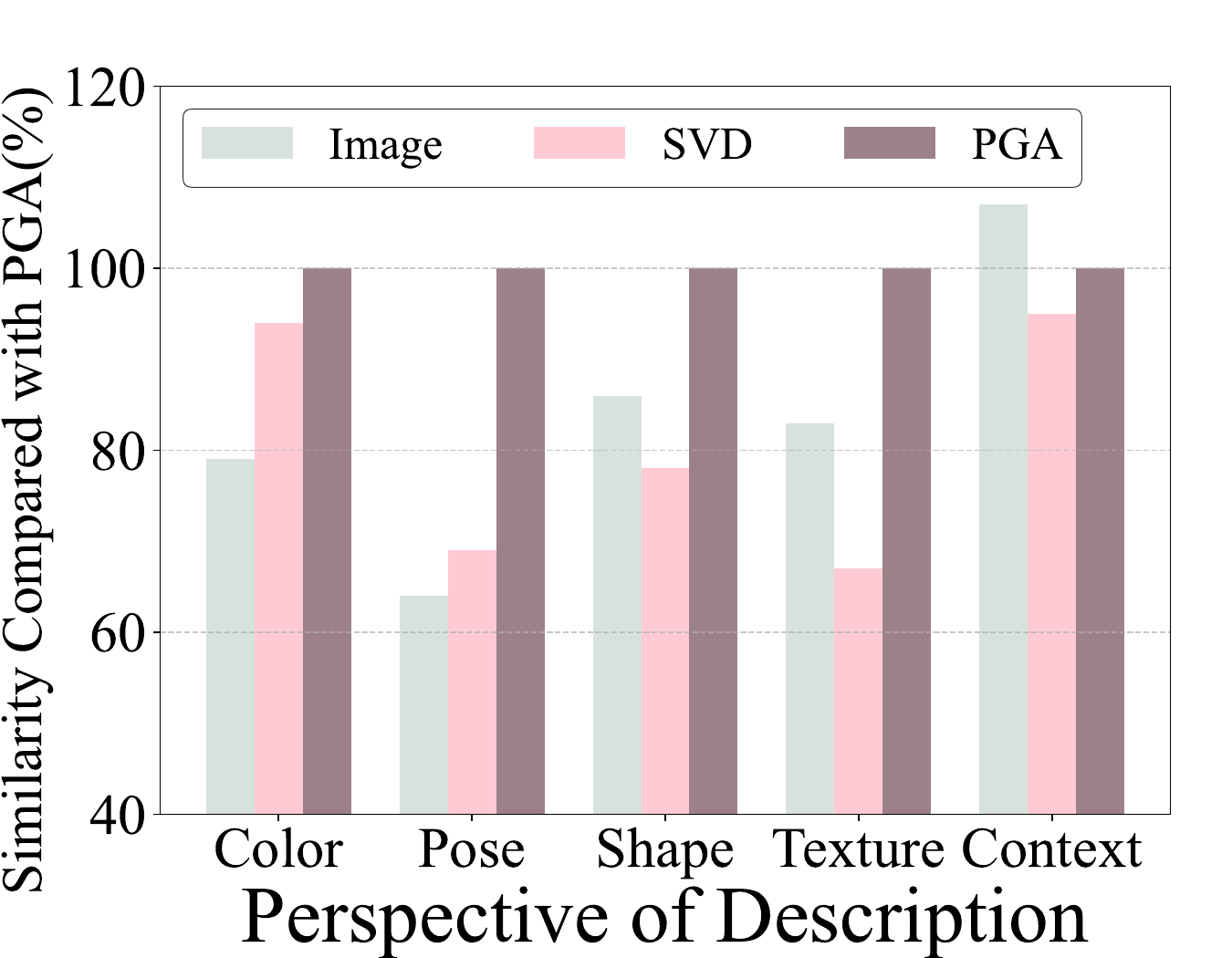}
		\caption{Similarity comparison.}
		\label{fig:simi}
	\end{subfigure}
	\hfill
	\begin{subfigure}{0.49\linewidth}
		\includegraphics[width=1\linewidth]{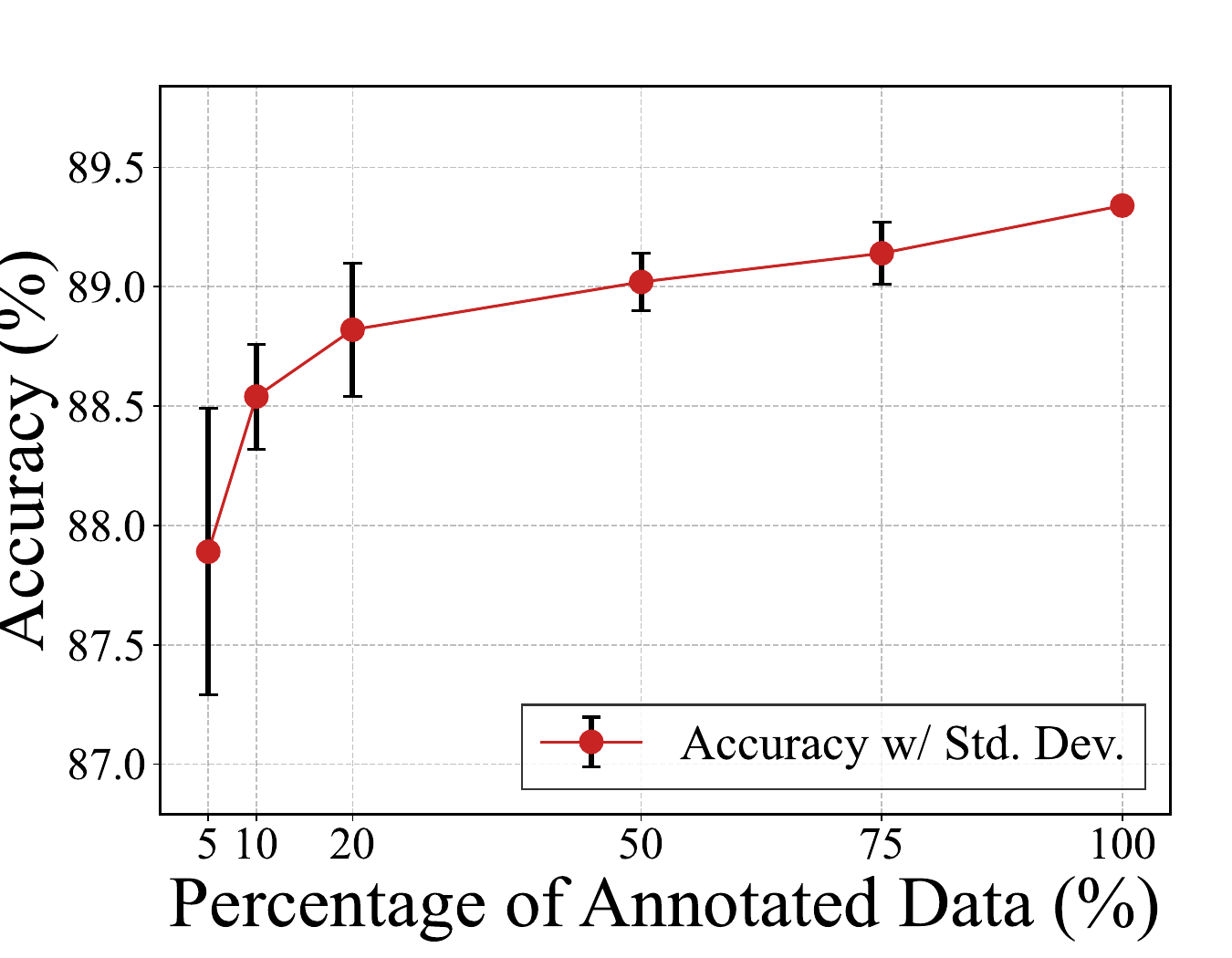}
		\caption{Annotation proportion.}
		\label{fig:subset}
	\end{subfigure}
	\caption{Analysis of PGA and annotation.}
	\label{fig:qwq}
\end{figure}

\noindent\textbf{Semantic Alignment of Attribute Anchoring.}
To validate the effectiveness of using PGA for extracting attribute anchors, we first employ a large language model to generate five fine-grained descriptions for each category in ObjectNet, focusing on attributes such as color and shape. We then compute the similarity between the text embeddings of these fine-grained descriptions and (1) the visual features of the original images, (2) anchors extracted via SVD, and (3) anchors transformed by PGA. The similarities are averaged within each category, and the results are shown in Fig.~\ref{fig:simi}.
As can be observed, compared to the anchors extracted by SVD, the PGA-based anchors are more effective at capturing intrinsic fine-grained attributes within each category. This capability enables the model to better preserve category-specific knowledge and effectively mitigates catastrophic forgetting, thereby validating the rationality of the proposed PGA-based multi-modal attribute extraction.

\noindent\textbf{Data Efficiency of MLLM Annotation.}
Since acquiring MLLM-generated captions for every image incurs computational cost, we investigate the impact of partial textual supervision. We use 5 different seeds to randomly sample \{5\%, 10\%, 20\%, 50\%, 75\%, 100\%\} of the training data to generate captions and use the extracted anchors to train \momo\ on CIFAR-100. As shown in Fig.~\ref{fig:subset}, the performance gap between using 100\% and 20\% annotation is negligible, with a drop of less than 1\%. Even with only 5\% MLLM coverage, \momo\ can achieve a competitive accuracy of 87.9\%. This suggests that the \emph{Anchored Attribute Extraction} module is highly data-efficient, capable of generalizing the learned textual subspace from a sparse set of enriched captions.

\noindent\textbf{Robustness to Annotation Sources.}
Tab.~\ref{tab:mllm_sens} shows that \momo\ consistently achieves the best average and final accuracy across all three datasets under both GPT-5 and LLaVA-generated annotations~\citep{liu2023llava}. The gains are especially large on Aircraft and CUB, indicating stronger resistance to forgetting in harder settings. Performance remains nearly unchanged when switching the captioning MLLM, suggesting that \momo\ is robust to annotation sources.

\begin{figure}
    \centering
    \includegraphics[width=\linewidth]{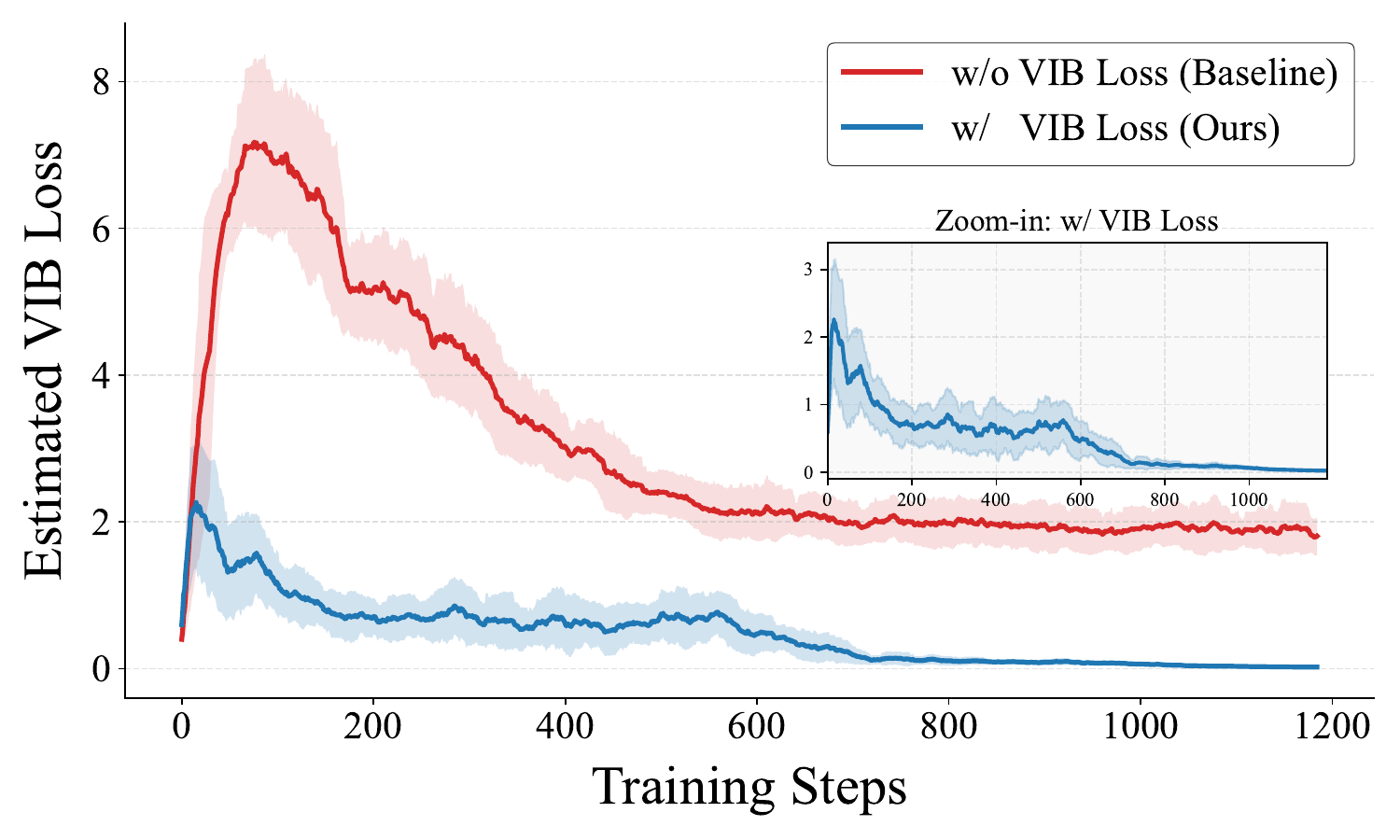}
    \caption{Estimated VIB Loss during the training on CIFAR100.}
    \label{fig:vib}
\end{figure}

\noindent\textbf{Semantic Interpretation of Learned Anchors.}
Although we use the term ``attributes'' throughout this paper, we clarify that the learned components should not be interpreted as exact natural-language visual attributes of each class. Instead, they are better viewed as abstract feature directions in the representation space, which may exhibit only coarse semantic alignment with human-interpretable concepts. To examine this property, we probe the learned anchors by retrieving their nearest text tokens in the CLIP vocabulary. As shown in Tab.~\ref{tab:anchor_similarity}, the alignment with strict symbolic attributes is imperfect. For example, PGA retrieves loosely related concepts such as ``red'' and ``inscription'' for envelopes, reflecting an abstract color- or text-related direction rather than a concrete structural attribute of the class. These observations suggest that the PGA anchor subspace captures coarse semantic correlations at the representation level, rather than exact human-defined attributes.

\noindent\textbf{Variational Information Bottleneck Optimization.}
Eq.~\eqref{eq:vib} encourages task-relevant evidence while suppressing noise. On CIFAR100, we estimate the VIB loss with a sampling-based approximation during training, and report the curves in Fig.~\ref{fig:vib}. Compared to the baseline without the two indirect terms, \momo\ consistently achieves lower VIB loss, indicating that our objective better enforces the intended information bottleneck. We also observe a mild downward trend even without these terms, but the reduction is relatively smaller and less stable, motivating the VIB-based objective as an optimization target.

\begin{figure}
	\centering
	\begin{subfigure}{0.46\linewidth}
		\includegraphics[width=1\linewidth]{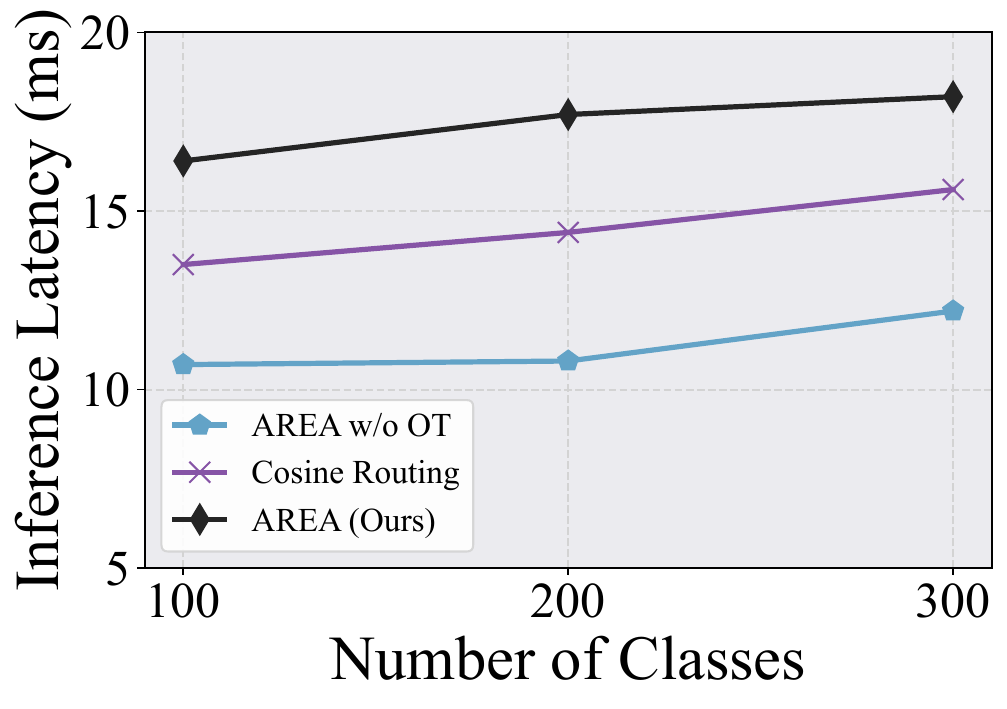}
		\caption{Inference latency.}
		\label{fig:latency}
	\end{subfigure}
	\hfill
	\begin{subfigure}{0.52\linewidth}
		\includegraphics[width=1\linewidth]{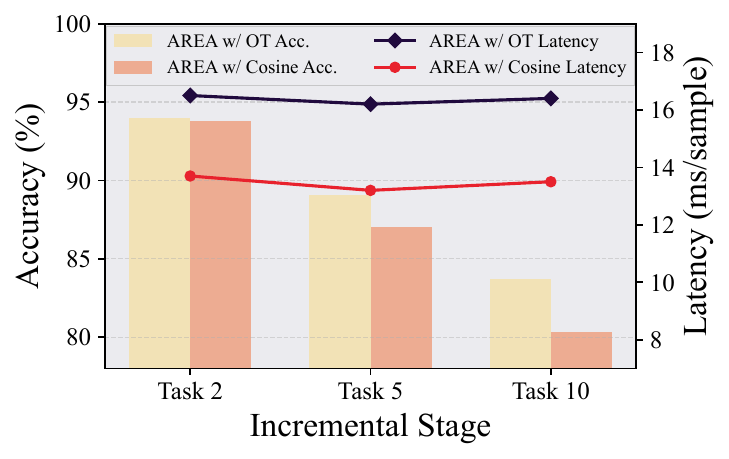}
		\caption{Latency--accuracy trade-off.}
		\label{fig:tradeoff}
	\end{subfigure}
	\caption{Efficiency analysis of inference routing.}
	\label{fig:qwqwqwqwqwq}
\end{figure}

\noindent\textbf{Inference Efficiency and Routing Trade-off.}
Since \momo\ performs OT-based routing over previously learned tasks and aggregates predictions from task-specific experts during inference, we further analyze its test-time efficiency and the necessity of OT routing. We report end-to-end latency and peak VRAM overhead on CIFAR-100, CUB-200, and SUN-300 under a single RTX 4090 with batch size 1. As shown in Fig.~\ref{fig:latency}, the inference latency of \momo\ increases mildly from 16.4 ms/sample to 18.2 ms/sample when the number of classes grows from 100 to 300. This is because \momo\ conducts routing over compact task-level attribute manifolds with batched Sinkhorn iterations, rather than exhaustive class-wise matching. These results suggest that the OT-based routing module introduces only a small constant overhead and does not become a practical bottleneck as the number of tasks or classes increases.

To further disentangle the contribution of distributional OT routing from the added inference complexity, we compare \momo\ with a simpler cosine-based routing baseline under the same CIFAR100 b0-Inc10 setting. As shown in Fig.~\ref{fig:tradeoff}, cosine routing is slightly faster, but its accuracy drops more significantly in later incremental stages. At the final 100-class stage, OT routing only introduces an additional 2.9 ms/sample compared with cosine routing, while improving the final accuracy by 3.39\%. This indicates a favorable accuracy--efficiency trade-off. The advantage comes from the fact that cosine routing performs point-to-point matching and is therefore more sensitive to local feature drift, whereas OT routing compares the query against the entire task-level attribute manifold. Such distribution-aware matching better captures task-level semantic structure and reduces cross-task misrouting in long-horizon incremental learning.

\begin{figure}
	\centering
	\includegraphics[width=1\columnwidth]{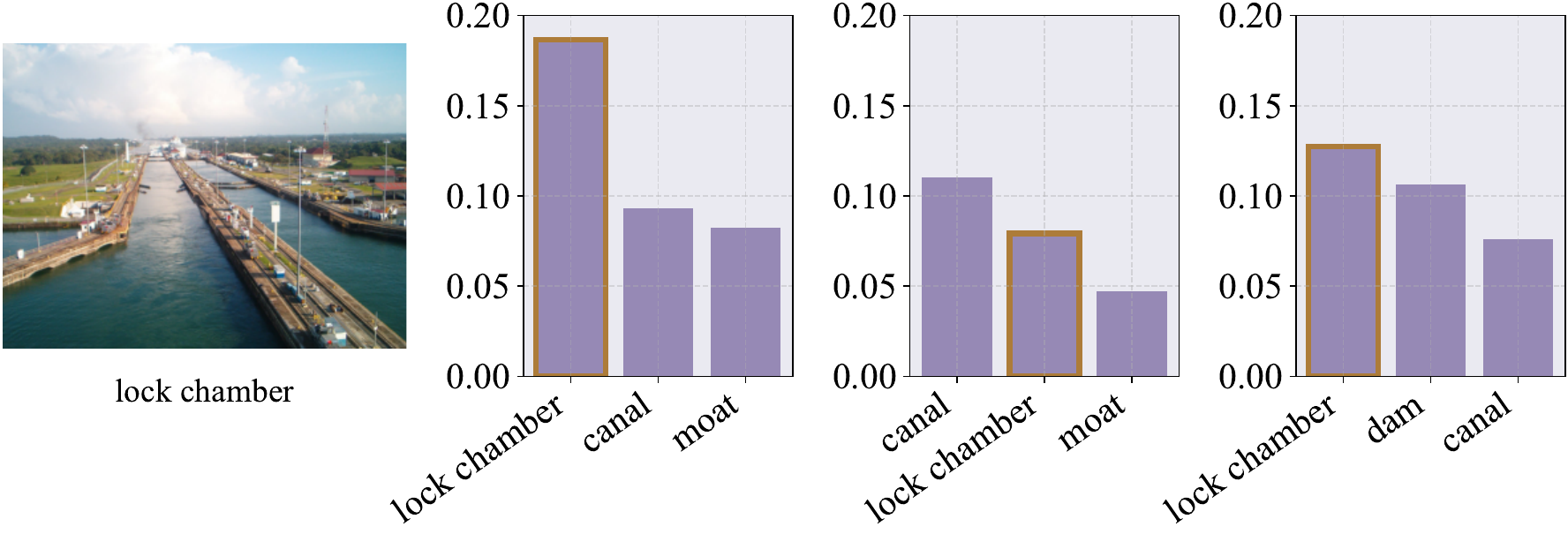}
	\caption{Prediction stability analysis for \momo. Bars denote the predicted class probabilities for the top-ranked classes.}
	\label{fig:predict}
\end{figure}

\noindent\textbf{Prediction Stability Analysis.}
Fig.~\ref{fig:predict} illustrates the effect of Attribute Anchors on an SUN example. The first panel shows the prediction after training only on the first task, where the model correctly predicts ``lock chamber''. The second panel reports the prediction after training on all tasks without anchors, where confidence shifts to competing classes and the correct class drops in rank. In contrast, the third panel shows that with anchors enabled, ``lock chamber'' remains top-ranked, indicating improved stability.

\section{Conclusion}
This paper tackles catastrophic forgetting in CLIP-based CIL by focusing on the underlying mechanism of attribute utilization. We stabilize the learning process by explicitly anchoring attribute extraction via a PGA-based mechanism and refining aggregation through a lightweight, learnable scoring module. By introducing interventional and compression terms to implicitly optimize the VIB loss, our approach effectively filters out redundancy, allowing the model to focus on the most discriminative attributes. Additionally, we consider an OT-based selection strategy to regulate attribute isolation and collaboration across tasks, further enhancing incremental stability. Extensive experiments across multiple benchmarks verify the superiority of \momo.

\noindent\textbf{Limitations.}
Our study is limited to CLIP-based class-incremental learning with fixed pre-trained vision-language encoders. Its behavior under severe modality gaps, highly imbalanced data, or larger generative MLLMs remains underexplored and requires further study.

\section*{Acknowledgments}

This work was supported in part by the Basic Research Program of Jiangsu (BK20251251), NSFC (62506160),  JSTJ-2025-147, Fundamental and Interdisciplinary Disciplines Breakthrough Plan of the Ministry of Education of China (No. JYB2025XDXM118), the 111 Center (No. B26023), and the Collaborative Innovation Center of Novel Software Technology and Industrialization.

\section*{Impact Statement}

This paper presents work whose goal is to advance the field of Machine
Learning. There are many potential societal consequences of our work, none of
which we feel must be specifically highlighted here.

\bibliography{example_paper}
\bibliographystyle{icml2026}

\newpage
\appendix
\onecolumn

\section{Proof of Stability (Theorem~\ref{thm:ot_stability})}
\label{app:proof_stability}

In this section, we provide the formal proof for Theorem~\ref{thm:ot_stability}, demonstrating the Lipschitz continuity of the Sinkhorn routing score with respect to the input visual embedding. This result theoretically guarantees that \momo is robust against \emph{extraction drift} by bounding the routing error linearly with the magnitude of the drift.

\subsection{Preliminaries and Problem Setup}

Recall the definition of the Sinkhorn distance (entropic optimal transport) between the source measure $\mu_\mathbf{x} = \delta_{\mathbf{z}}$ (where $\mathbf{z} = g_v(\mathbf{x}) \in \mathbb{S}^{d-1}$) and the target task measure $\nu_b = \sum_{j=1}^{N_b} w_j \delta_{\mathbf{v}_j}$ (where $\mathbf{v}_j \in \mathbb{S}^{d-1}$ are fixed attribute atoms). The regularized transport cost is defined as:
\begin{equation}
    \mathcal{W}_\varepsilon(\mu_\mathbf{x}, \nu_b) = \min_{\boldsymbol{\pi} \in \Pi(\mu_\mathbf{x}, \nu_b)} \underbrace{\langle \boldsymbol{\pi}, \mathbf{C}(\mathbf{z}) \rangle - \varepsilon H(\boldsymbol{\pi})}_{\mathcal{L}(\boldsymbol{\pi}, \mathbf{z})},
\end{equation}
where $\mathbf{C}(\mathbf{z}) \in \mathbb{R}^{1 \times N_b}$ is the cost vector with elements $C_j(\mathbf{z}) = 1 - \mathbf{z}^\top \mathbf{v}_j$ (Cosine distance). The coupling space $\Pi(\mu_\mathbf{x}, \nu_b)$ is constrained by the marginals of $\mu_\mathbf{x}$ and $\nu_b$.

Our goal is to bound the variation of the routing score $\mathcal{S}_b(\mathbf{z}) = -\mathcal{W}_\varepsilon(\mu_\mathbf{x}, \nu_b)$ under a perturbation $\boldsymbol{\Delta}$ in $\mathbf{z}$. This is equivalent to bounding the norm of the gradient $\|\nabla_\mathbf{z} \mathcal{W}_\varepsilon\|_2$.

\subsection{Derivation of Lipschitz Continuity}

We proceed by utilizing the envelope theorem for optimization problems to compute the sensitivity of the Sinkhorn distance.

\begin{lemma}[Gradient via Danskin's Theorem]
\label{lemma:gradient}
Let $f(\mathbf{z}) = \min_{\boldsymbol{\pi} \in \Omega} \mathcal{L}(\boldsymbol{\pi}, \mathbf{z})$ be the optimal value function of the Sinkhorn problem. Since the entropic regularization term $-\varepsilon H(\boldsymbol{\pi})$ is strictly convex with respect to $\boldsymbol{\pi}$, there exists a unique optimal coupling $\boldsymbol{\pi}^*$. By \textbf{Danskin's Theorem}, the gradient of the value function with respect to the input parameter $\mathbf{z}$ is given by the gradient of the objective function evaluated at the unique minimizer:
\begin{equation}
    \nabla_\mathbf{z} \mathcal{W}_\varepsilon = \nabla_\mathbf{z} \mathcal{L}(\boldsymbol{\pi}^*, \mathbf{z}) = \sum_{j=1}^{N_b} \pi^*_{j} \nabla_\mathbf{z} C_j(\mathbf{z}).
\end{equation}
\end{lemma}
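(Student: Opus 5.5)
The plan is to verify the hypotheses of Danskin's theorem for $f(\mathbf{z})=\min_{\boldsymbol{\pi}\in\Omega}\mathcal{L}(\boldsymbol{\pi},\mathbf{z})$, with $\Omega=\Pi(\mu_\mathbf{x},\nu_b)$. Then read off the gradient, and finally cross-check the formula by computing $\boldsymbol{\pi}^*$ explicitly.

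\textbf{Step 1: the feasible set does not depend on $\mathbf{z}$.} The marginal constraints involve only the masses of $\mu_\mathbf{x}$ (one atom, mass $1$) and of $\nu_b$ (weights $w_j$). They do not involve the location $\mathbf{z}$ of the source atom. Hence $\Omega=\{\boldsymbol{\pi}\in\mathbb{R}^{1\times N_b}_{\ge 0}:\ \sum_j\pi_j=1,\ \pi_j=w_j\}$ is a fixed set. It is nonempty, convex and compact (a subset of the simplex). This is exactly the setting of Danskin's theorem: a parametric family of objectives minimized over a fixed compact set. I will extend $C_j(\mathbf{z})=1-\mathbf{z}^\top\mathbf{v}_j$ to all $\mathbf{z}\in\mathbb{R}^d$, so gradients are taken in the ambient space.

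\textbf{Step 2: regularity of $\mathcal{L}$.} I will check that $\mathcal{L}(\boldsymbol{\pi},\mathbf{z})=\sum_j\pi_jC_j(\mathbf{z})+\varepsilon\sum_j\pi_j\log\pi_j$ is jointly continuous on $\Omega\times\mathbb{R}^d$, using the convention $0\log 0=0$ and the continuity of $t\log t$ on $[0,1]$. I will also check that it is continuously differentiable in $\mathbf{z}$, with $\nabla_\mathbf{z}\mathcal{L}(\boldsymbol{\pi},\mathbf{z})=\sum_j\pi_j\nabla_\mathbf{z}C_j(\mathbf{z})=-\sum_j\pi_j\mathbf{v}_j$, which is jointly continuous. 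The entropy term does not depend on $\mathbf{z}$, so it drops out of the $\mathbf{z}$-gradient. For every fixed $\mathbf{z}$, the map $\boldsymbol{\pi}\mapsto\mathcal{L}(\boldsymbol{\pi},\mathbf{z})$ is the sum of a linear function and the strictly convex negative entropy. So it has a unique minimizer $\boldsymbol{\pi}^*(\mathbf{z})$ on the compact convex set $\Omega$.

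\textbf{Step 3: apply Danskin's theorem.} Danskin's theorem (continuity of $\mathcal{L}$ and $\nabla_\mathbf{z}\mathcal{L}$ on $\Omega\times U$ with $\Omega$ compact) gives the directional derivative $f'(\mathbf{z};\mathbf{h})=\min_{\boldsymbol{\pi}\in\Omega^*(\mathbf{z})}\langle\nabla_\mathbf{z}\mathcal{L}(\boldsymbol{\pi},\mathbf{z}),\mathbf{h}\rangle$, where $\Omega^*(\mathbf{z})$ is the set of minimizers. Since this set is the singleton $\{\boldsymbol{\pi}^*\}$, the directional derivative is linear in $\mathbf{h}$. Hence $f$ is differentiable and
\[
\nabla_\mathbf{z}\mathcal{W}_\varepsilon=\sum_j\pi^*_j\nabla_\mathbf{z}C_j(\mathbf{z}),
\]
which is the claim.

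\textbf{Step 4: sanity check and main obstacle.} Because the source measure is a Dirac mass, $\Omega$ is in fact the single point $\pi_j=w_j$. So $\boldsymbol{\pi}^*=\mathbf{w}$ for every $\mathbf{z}$. Then $\mathcal{W}_\varepsilon=1-\mathbf{z}^\top\sum_jw_j\mathbf{v}_j+\varepsilon\sum_jw_j\log w_j$, and its gradient $-\sum_jw_j\mathbf{v}_j$ agrees with the Danskin formula by direct differentiation. This also confirms differentiability independently of Step 3.

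The only delicate point I anticipate is the behaviour of the entropy at the boundary of the simplex, where $\log\pi_j$ is not differentiable. This matters only for differentiability in $\boldsymbol{\pi}$, which Danskin does not require. Only continuity in $\boldsymbol{\pi}$ and differentiability in $\mathbf{z}$ are needed, and both hold. A second subtlety is that the source point $\mathbf{z}$ is constrained to $\mathbb{S}^{d-1}$. I will handle this through the ambient extension in Step 1; the Riemannian gradient is then the tangential projection of the ambient gradient and has no larger norm.
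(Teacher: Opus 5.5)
Your proposal is correct and follows the same route as the paper: you check that the coupling set is compact and independent of $\mathbf{z}$, that $\mathcal{L}$ is continuous and differentiable in $\mathbf{z}$, and that $\boldsymbol{\pi}^*$ is unique by strict convexity, and then you invoke Danskin's theorem, verifying the hypotheses more carefully than the paper's brief argument does. Your Step 4 remark is also correct and worth keeping: since the source is a Dirac mass, $\Pi(\delta_{\mathbf{z}},\nu_b)=\{\mathbf{w}\}$ is a single point, so $\mathcal{W}_\varepsilon(\mathbf{z})=\sum_j w_j C_j(\mathbf{z})-\varepsilon H(\mathbf{w})$ is affine in $\mathbf{z}$, and the formula follows by direct differentiation without Danskin at all.
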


\begin{proof}
The objective function $\mathcal{L}(\boldsymbol{\pi}, \mathbf{z})$ is continuous and differentiable with respect to $\mathbf{z}$. The feasible set $\Pi$ is compact. The strict convexity ensures the uniqueness of $\boldsymbol{\pi}^*$, satisfying the hypothesis for Danskin's Theorem, allowing the interchange of the differentiation and minimization operators.
\end{proof}

\begin{lemma}[Boundedness of Riemannian Gradient]
\label{lemma:cosine_bound}
For any unit vectors $\mathbf{z}, \mathbf{v} \in \mathbb{S}^{d-1}$, let the cost function be the cosine distance $C(\mathbf{z}, \mathbf{v}) = 1 - \mathbf{z}^\top \mathbf{v}$. The magnitude of the gradient of $C$ with respect to $\mathbf{z}$ is bounded by 1.
\end{lemma}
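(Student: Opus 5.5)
The lemma is a direct computation, so I will handle both the Euclidean gradient and its Riemannian (tangent) projection. First, compute the Euclidean gradient of $C(\mathbf{z},\mathbf{v}) = 1-\mathbf{z}^\top\mathbf{v}$ with respect to $\mathbf{z}$. The function is affine in $\mathbf{z}$, so $\nabla_\mathbf{z} C = -\mathbf{v}$. Since $\mathbf{v}\in\mathbb{S}^{d-1}$, this gives $\|\nabla_\mathbf{z} C\|_2 = \|\mathbf{v}\|_2 = 1$, so the Euclidean bound holds with equality.

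Because $\mathbf{z}$ is constrained to the hypersphere (the title speaks of the Riemannian gradient), I then pass to the Riemannian gradient on $\mathbb{S}^{d-1}$. This is the orthogonal projection of the Euclidean gradient onto the tangent space $T_\mathbf{z}\mathbb{S}^{d-1}$, which gives
\begin{equation*}
\operatorname{grad}_\mathbf{z} C = -(I-\mathbf{z}\mathbf{z}^\top)\mathbf{v} = -\big(\mathbf{v}-(\mathbf{z}^\top\mathbf{v})\mathbf{z}\big).
\end{equation*}
The operator $I-\mathbf{z}\mathbf{z}^\top$ is an orthogonal projector, so its operator norm is at most $1$, and therefore $\|\operatorname{grad}_\mathbf{z} C\|_2\le\|\mathbf{v}\|_2=1$. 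An explicit check confirms this: $\|\mathbf{v}-(\mathbf{z}^\top\mathbf{v})\mathbf{z}\|_2^2 = 1-(\mathbf{z}^\top\mathbf{v})^2 = \sin^2\theta\le 1$, where $\theta$ is the geodesic angle between $\mathbf{z}$ and $\mathbf{v}$. The bound is attained exactly when $\mathbf{z}\perp\mathbf{v}$.

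There is no real obstacle here. The one point to state carefully is which gradient is meant, since the bound must hold for whichever one Lemma~\ref{lemma:gradient} feeds into. Both versions are bounded by $1$, so the lemma holds either way.

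For later use in Theorem~\ref{thm:ot_stability}, I will note how the pieces combine. By Lemma~\ref{lemma:gradient}, $\nabla_\mathbf{z}\mathcal{W}_\varepsilon=\sum_j\pi^*_j\nabla_\mathbf{z}C_j$, and this is a convex combination because $\sum_j\pi^*_j=1$ (the source measure is a Dirac of unit mass). The triangle inequality together with the present bound then gives $\|\nabla_\mathbf{z}\mathcal{W}_\varepsilon\|_2\le 1$. The mean value theorem along the segment (or along the geodesic) from $\mathbf{z}$ to $\mathbf{z}+\boldsymbol{\Delta}$ then yields \eqref{eq:lipschitz} with $L=1$.
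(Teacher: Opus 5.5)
Your proof is correct and follows the paper's argument. Both take the Euclidean gradient $-\mathbf{v}$, project it onto $T_\mathbf{z}\mathbb{S}^{d-1}$ with $\mathbf{I}-\mathbf{z}\mathbf{z}^\top$, and compute $1-(\mathbf{z}^\top\mathbf{v})^2=\sin^2\theta\le 1$; your operator-norm remark is a shortcut to the same bound, and your note that the unprojected gradient already has norm exactly $1$ is the form that actually supports the mean value theorem along the straight segment from $\mathbf{z}$ to $\mathbf{z}+\boldsymbol{\Delta}$ (the geodesic version only bounds the change by the geodesic distance, which is at least $\|\boldsymbol{\Delta}\|_2$).
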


\begin{proof}
We consider the gradient in the ambient Euclidean space. The derivative is $\nabla_\mathbf{z} (1 - \mathbf{z}^\top \mathbf{v}) = - \mathbf{v}$.
To be rigorous regarding the spherical constraint $\|\mathbf{z}\|=1$, we project this gradient onto the tangent space $T_\mathbf{z}\mathbb{S}^{d-1}$ using the projection operator $\mathbf{P}_\mathbf{z} = \mathbf{I} - \mathbf{z}\mathbf{z}^\top$:
\begin{equation}
    \nabla_{\mathbb{S}^{d-1}} C = \mathbf{P}_\mathbf{z}(-\mathbf{v}) = -\mathbf{v} + (\mathbf{z}^\top \mathbf{v})\mathbf{z}.
\end{equation}
We compute the squared norm of the Riemannian gradient:
\begin{align}
    \|\nabla_{\mathbb{S}^{d-1}} C\|_2^2 &= \|-\mathbf{v} + \cos\theta \cdot \mathbf{z}\|_2^2 \\
    &= \|\mathbf{v}\|_2^2 + \cos^2\theta \|\mathbf{z}\|_2^2 - 2\cos\theta (\mathbf{z}^\top \mathbf{v}) \\
    &= 1 + \cos^2\theta - 2\cos^2\theta \\
    &= 1 - \cos^2\theta = \sin^2\theta \le 1.
\end{align}
Thus, the gradient norm is bounded by 1 everywhere on the manifold.
\end{proof}

\noindent\textbf{Proof of Theorem~\ref{thm:ot_stability}.}
Combining Lemma~\ref{lemma:gradient} and Lemma~\ref{lemma:cosine_bound}, we bound the gradient of the Sinkhorn distance:
\begin{equation}
    \| \nabla_\mathbf{z} \mathcal{W}_\varepsilon \|_2 = \left\| \sum_{j=1}^{N_b} \pi^*_{j} \nabla_\mathbf{z} C_j(\mathbf{z}) \right\|_2.
\end{equation}
By applying the Triangle Inequality and substituting the bound from Lemma~\ref{lemma:cosine_bound}:
\begin{equation}
    \| \nabla_\mathbf{z} \mathcal{W}_\varepsilon \|_2 \le \sum_{j=1}^{N_b} |\pi^*_{j}| \cdot \| \nabla_\mathbf{z} C_j(\mathbf{z}) \|_2 \le \sum_{j=1}^{N_b} \pi^*_{j} \cdot 1.
\end{equation}
Since $\boldsymbol{\pi}^*$ is a valid transport plan, it must satisfy the marginal constraint for the source measure $\mu_\mathbf{x}$. Given that $\mu_\mathbf{x}$ is a Dirac measure with unit mass, the total mass transported is conserved: $\sum_{j=1}^{N_b} \pi^*_{j} = 1$. Therefore:
\begin{equation}
    \| \nabla_\mathbf{z} \mathcal{W}_\varepsilon \|_2 \le 1.
\end{equation}
Since the gradient norm is globally bounded by a constant $L=1$, by the Mean Value Theorem on Riemannian manifolds, the function $\mathcal{W}_\varepsilon(\cdot, \nu_b)$ is $1$-Lipschitz continuous. Consequently, for any extraction drift $\boldsymbol{\Delta}$ such that $\tilde{\mathbf{z}} = \mathbf{z} + \boldsymbol{\Delta}$:
\begin{equation}
    | \mathcal{S}_b(\mathbf{z}) - \mathcal{S}_b(\tilde{\mathbf{z}}) | = | \mathcal{W}_\varepsilon(\mu_{\mathbf{z}}, \nu_b) - \mathcal{W}_\varepsilon(\mu_{\tilde{\mathbf{z}}}, \nu_b) | \le 1 \cdot \| \boldsymbol{\Delta} \|_2.
\end{equation}
This confirms that the routing error is linearly bounded by the drift magnitude, completing the proof. \qed

\section{Detailed procedure of Sec.~\ref{sec:attribute_anchor}}
\label{sec:appendix_pga}

This appendix provides the mathematical derivations and implementation specifics of the Principal Geodesic Analysis (PGA) introduced in Sec.~\ref{sec:attribute_anchor}. While standard Principal Component Analysis (PCA) assumes data resides in a Euclidean space, CLIP embeddings are normalized to the unit hypersphere $\mathbb{S}^{d-1}$. Applying Euclidean statistics directly ignores the manifold's curvature, leading to geometric distortion.

To address this, we employ PGA to generalize PCA to Riemannian manifolds. The procedure consists of three sequential steps: (1) estimating the intrinsic mean (Fr\'echet Mean), (2) mapping data to the tangent space via the Logarithmic Map, and (3) performing eigendecomposition in the tangent space.

\subsection{Step 1: Approximating the Fr\'echet Mean}
The first step is to locate the center of the class distribution on the manifold. The Fr\'echet mean $\boldsymbol{\mu}_c$ is defined as the point minimizing the sum of squared geodesic distances:
\begin{equation}
    \boldsymbol{\mu}_c^{\text{vis}} = \operatorname*{arg min}_{\mathbf{p} \in \mathbb{S}^{d-1}} \sum_{i=1}^{n_c} d^2(\mathbf{p}, \mathbf{x}_i; \mathbb{S}^{d-1}).
\end{equation}
Unlike in Euclidean space, there is no closed-form solution for the global minimum on a generic sphere. However, since the visual features within a specific semantic class are highly concentrated (i.e., contained within a small geodesic ball), the arithmetic mean provides an extremely close approximation when projected back onto the sphere. In our implementation, we compute the initialized prototype as:
\begin{equation}
    \boldsymbol{\mu}_c^{\text{vis}} \approx \frac{\sum_{i=1}^{n_c} \mathbf{x}_i}{\left\| \sum_{i=1}^{n_c} \mathbf{x}_i \right\|_2}.
\end{equation}

\subsection{Step 2: Riemannian Logarithmic Map}
This step corresponds to Eq.~\eqref{eq:mu_i} in the main text and is crucial for flattening the curved geometry. We map the data samples $\mathbf{x}_i \in \mathbb{S}^{d-1}$ to the tangent space $T_{\boldsymbol{\mu}_c} \mathbb{S}^{d-1}$ at the mean. Geometrically, the tangent space is the hyperplane orthogonal to $\boldsymbol{\mu}_c$:
\begin{equation}
    T_{\boldsymbol{\mu}_c} \mathbb{S}^{d-1} = \{ \mathbf{v} \in \mathbb{R}^d \mid \mathbf{v}^\top \boldsymbol{\mu}_c = 0 \}.
\end{equation}
The Riemannian Logarithmic map, denoted as $\operatorname{Log}_{\boldsymbol{\mu}_c}(\mathbf{x}_i)$, maps a point $\mathbf{x}_i$ on the sphere to a vector $\mathbf{u}_i$ in the tangent plane. The vector $\mathbf{u}_i$ preserves the distance from the mean, such that $\|\mathbf{u}_i\|_2 = d(\boldsymbol{\mu}_c, \mathbf{x}_i)$.

\noindent\textbf{Derivation.} Let $\theta_i = \arccos(\boldsymbol{\mu}_c^\top \mathbf{x}_i)$ be the geodesic distance (angle) between the mean and the sample.
First, we compute the orthogonal projection of $\mathbf{x}_i$ onto the tangent plane using the Gram-Schmidt process:
\begin{equation}
    \mathbf{x}_{i, \perp} = \mathbf{x}_i - (\boldsymbol{\mu}_c^\top \mathbf{x}_i)\boldsymbol{\mu}_c.
\end{equation}
The Euclidean length of this projection is $\|\mathbf{x}_{i, \perp}\|_2 = \sin(\theta_i) = \sqrt{1 - (\boldsymbol{\mu}_c^\top \mathbf{x}_i)^2}$.
To obtain the correct tangent vector $\mathbf{u}_i$, we must normalize $\mathbf{x}_{i, \perp}$ to unit length (giving the direction) and then scale it by the true geodesic distance $\theta_i$ (giving the magnitude):
\begin{equation}
    \mathbf{u}_i = \underbrace{\theta_i}_{\text{Magnitude}} \cdot \underbrace{\frac{\mathbf{x}_{i, \perp}}{\|\mathbf{x}_{i, \perp}\|_2}}_{\text{Direction}}.
\end{equation}
Substituting the terms back, we obtain the formulation used in the main paper:
\begin{equation}
\begin{aligned}
    \mathbf{u}_i &= \frac{\arccos(\boldsymbol{\mu}_c^{\text{vis}\top} \mathbf{x}_i)}{\sin(\arccos(\boldsymbol{\mu}_c^{\text{vis}\top} \mathbf{x}_i))} \left( \mathbf{x}_i - (\boldsymbol{\mu}_c^{\text{vis}\top} \mathbf{x}_i)\boldsymbol{\mu}_c^\text{vis} \right) \\
    &= \frac{\arccos(\boldsymbol{\mu}_c^{\text{vis}\top} \mathbf{x}_i)}{\sqrt{1 - (\boldsymbol{\mu}_c^{\text{vis}\top} \mathbf{x}_i)^2}} \left( \mathbf{x}_i - (\boldsymbol{\mu}_c^{\text{vis}\top} \mathbf{x}_i)\boldsymbol{\mu}_c^\text{vis} \right).
\end{aligned}
\end{equation}
\textit{Numerical Stability:} When $\mathbf{x}_i \to \boldsymbol{\mu}_c$, both the numerator and denominator approach zero. In practice, if $1 - (\boldsymbol{\mu}_c^\top \mathbf{x}_i)^2 < \epsilon$, we approximate the scaling factor $\frac{\theta}{\sin\theta} \to 1$ via Taylor expansion to avoid numerical instability.

\subsection{Step 3: Tangent Covariance and Attribute Basis}
Once the dataset is mapped to the tangent vectors $\mathcal{U} = \{\mathbf{u}_1, \dots, \mathbf{u}_{n_c}\}$, the problem reduces to standard Euclidean statistics in $\mathbb{R}^d$. The sample covariance matrix in the tangent space is:
\begin{equation}
    C_c^\text{vis} = \frac{1}{n_c} \sum_{i=1}^{n_c} \mathbf{u}_i \mathbf{u}_i^\top.
\end{equation}
We then perform Singular Value Decomposition (SVD) on $C_c^\text{vis}$ to obtain the eigenvectors $U_c^\text{vis}$ and eigenvalues $\Sigma_c^\text{vis}$. We select the top-$K$ eigenvectors $V_c^\text{vis} = U_c^\text{vis}[:, :K]$ to form the visual attribute basis. These vectors represent the principal geodesic directions that capture the maximum variance of the class distribution on the hypersphere. The same procedure is applied to the textual embeddings to obtain $V_c^\text{txt}$.

\section{Pseudocode}
\label{sec:algorithms}
We summarize the training and inference procedures of \momo\ in Alg.~\ref{alg:momo_train} and Alg.~\ref{alg:momo_test}. 
Training proceeds sequentially over tasks. For each new task, we first construct and freeze multi-modal class anchors via PGA (Sec.~\ref{sec:attribute_anchor}), then learn a lightweight task expert with stabilized aggregation (Sec.~\ref{sec:agg}--Sec.~\ref{sec:stabilized_scoring}). 
At inference time, we perform OT-based soft routing over task manifolds and aggregate predictions in a mixture-of-experts manner (Sec.~\ref{sec:inference}).

\begin{algorithm}[t]
\caption{Training \momo\ on Task $b$}
\label{alg:momo_train}
\begin{algorithmic}[1]
\REQUIRE Task dataset $\mathcal{D}_b=\{(\mathbf{x}_i,y_i)\}$, frozen encoders $g_v,g_t$, MLLM captioner, previous anchors $\mathcal{A}_{<b}$
\REQUIRE Hyperparameters $K,M,(\rho_{\min},\rho_{\max}),\lambda_{\mathrm{int}},\lambda_{\mathrm{comp}}$
\ENSURE New anchors $\mathcal{A}_b$, trained task expert $\mathcal{E}_b=\{\mathcal{S}_b^{\mathrm{vis}},\mathcal{R}_b^{\mathrm{vis}},\mathcal{S}_b^{\mathrm{txt}},\mathcal{R}_b^{\mathrm{txt}}\}$

\vspace{0.25em}
\STATE \textbf{Anchor multi-modal attributes (Sec.~\ref{sec:attribute_anchor}).}
\FOR{each new class $c \in \mathcal{Y}_b$}
    \STATE Extract normalized visual embeddings using $g_v$ for samples of class $c$
    \STATE Compute visual anchors $(\boldsymbol{\mu}_c^{\mathrm{vis}}, V_c^{\mathrm{vis}})$ via PGA (Eqs.~\eqref{eq:com_mu}--\eqref{eq:cv})
    \STATE Generate MLLM captions and form textual embeddings $h_t(\cdot)$
    \STATE Compute textual anchors $(\boldsymbol{\mu}_c^{\mathrm{txt}}, V_c^{\mathrm{txt}})$ via the same PGA procedure
\ENDFOR
\STATE Freeze and store anchors $\mathcal{A}_b=\{(\boldsymbol{\mu}_c^{\mathrm{vis}},V_c^{\mathrm{vis}},\boldsymbol{\mu}_c^{\mathrm{txt}},V_c^{\mathrm{txt}})\}_{c\in\mathcal{Y}_b}$

\vspace{0.25em}
\STATE \textbf{Train stabilized aggregation expert (Sec.~\ref{sec:agg}--Sec.~\ref{sec:stabilized_scoring}).}
\STATE Initialize task expert parameters $\mathcal{E}_b$
\REPEAT
    \STATE Sample a mini-batch $\mathcal{B}\subset\mathcal{D}_b$
    \FOR{each $(\mathbf{x}_i,y_i)\in\mathcal{B}$}
        \STATE Compute score-based and residual-refined embeddings using Eqs.~\eqref{eq:ev}--\eqref{eq:et}
        \STATE Compute intervention-based monotonicity loss $\mathcal{L}_{\mathrm{int}}$ (Eqs.~\eqref{eq:occlude}--\eqref{eq:mask_loss})
        \STATE Generate $M$ augmented views and compute compression loss $\mathcal{L}_{\mathrm{comp}}$ (Eq.~\eqref{eq:cons_loss_rewrite})
        \STATE Compute contrastive loss $\mathcal{L}_{\mathrm{cont}}$ (as defined in your training objective)
    \ENDFOR
    \STATE Form total objective $\mathcal{L}_{\mathrm{stab}}$ by Eq.~\eqref{eq:stab_obj}
    \STATE Update task expert $\mathcal{E}_b$ by minimizing $\mathcal{L}_{\mathrm{stab}}$
\UNTIL{convergence}
\STATE \textbf{return} $\mathcal{A}_b$ and $\mathcal{E}_b$
\end{algorithmic}
\end{algorithm}

\begin{algorithm}[t]
\caption{Inference of \momo\ with OT-based Soft Routing}
\label{alg:momo_test}
\begin{algorithmic}[1]
\REQUIRE Test image $\mathbf{x}$, frozen encoders $g_v,g_t$, anchors $\{\mathcal{A}_b\}_{b=1}^B$, experts $\{\mathcal{E}_b\}_{b=1}^B$
\REQUIRE OT parameters $(\varepsilon,\tau)$
\ENSURE Predicted label $\hat{y}$

\vspace{0.25em}
\STATE \textbf{OT-based routing (Sec.~\ref{sec:inference}).}
\STATE Compute query embedding $g_v(\mathbf{x})$ and form the source measure (Eq.~\eqref{eq:ot_c})
\FOR{each task $b=1,\dots,B$}
    \STATE Construct the task attribute measure from anchored bases (as in Sec.~\ref{sec:inference})
    \STATE Compute Sinkhorn distance $\mathcal{W}_{\varepsilon}(\mu_{\mathbf{x}},\nu_b)$ (Eq.~\eqref{eq:sinkhorn})
\ENDFOR
\STATE Convert transport costs to routing weights $p(b\mid\mathbf{x})$ (Eq.~\eqref{eq:boltzmann})

\vspace{0.25em}
\STATE \textbf{Mixture-of-experts prediction (Sec.~\ref{sec:inference}).}
\FOR{each task $b=1,\dots,B$}
    \STATE Compute task-specific embeddings using the corresponding expert $\mathcal{E}_b$ (Eqs.~\eqref{eq:ev}--\eqref{eq:et})
    \STATE Compute task-wise similarity scores for all candidate classes
\ENDFOR
\STATE Aggregate task-wise predictions with routing weights and output $\hat{y}$ (Eq.~\eqref{eq:class_pred})
\STATE \textbf{return} $\hat{y}$
\end{algorithmic}
\end{algorithm}

\section{Implementation Details for Stabilized Aggregation}
\label{app:implementation_details}

In this section, we provide the specific implementation details regarding the intervention-based monotonicity constraint and the view-invariant compression objective introduced in Sec.~\ref{sec:stabilized_scoring}.

\subsection{Intervention and Occlusion Generation}
\label{app:intervention_details}

To compute the intervention loss $\mathcal{L}_{\text{int}}$ (Eq.~\ref{eq:mask_loss}), we generate the intervened view $\tilde{\mathbf{x}}_i$ by corrupting a contiguous region of the input image with random noise. This process follows a randomized block-erasing strategy.

For an input image of spatial dimensions $H \times W$, the binary mask $\mathbf{M}_i$ and the noise tensor $\boldsymbol{\epsilon}_i$ are generated as follows:

    \noindent \textbf{Geometric Parameters:} We sample the occlusion area ratio $\rho$ and the aspect ratio $\eta$ from uniform distributions:
    \begin{equation}
        \rho \sim \mathrm{Unif}(\rho_{\min}, \rho_{\max}), \quad \eta \sim \mathrm{Unif}(\eta_{\min}, \eta_{\max}).
    \end{equation}
    In our experiments, we set $\rho_{\min}=0.02$, $\rho_{\max}=0.4$, $\eta_{\min}=0.3$, and $\eta_{\max}=3.3$. 
    Based on these sampled values, the height $h_e$ and width $w_e$ of the occlusion block are calculated as $h_e = \sqrt{H W \rho \eta}$ and $w_e = \sqrt{H W \rho / \eta}$. The block is placed at a random coordinate $(x_e, y_e)$ within the image such that the entire block lies inside the image boundaries.
    
    \noindent\textbf{Noise Distribution:} The noise tensor $\boldsymbol{\epsilon}_i$ is generated from a standard independent and identically distributed (i.i.d.) Gaussian distribution:
    \begin{equation}
        \boldsymbol{\epsilon}_i \sim \mathcal{N}(\mathbf{0}, \mathbf{I}),
    \end{equation}
    where $\mathbf{I}$ denotes the identity matrix. The dimensions of $\boldsymbol{\epsilon}_i$ match those of the input image $\mathbf{x}_i$. During the mixing process in Eq.~\ref{eq:occlude}, regions defined by the mask $\mathbf{M}_i$ are replaced by this Gaussian noise, serving as uninformative perturbations to test the scorer's stability.

\subsection{Augmentations for Invariant Compression}
\label{app:augmentation_details}

To optimize the compression objective $\mathcal{L}_{\text{comp}}$ (Eq.~\ref{eq:cons_loss_rewrite}), we generate $M=3$ augmented views for each image. We employ a stochastic data augmentation pipeline $\mathcal{T}$ designed to alter nuisance factors (e.g., color, orientation, scale) while preserving semantic attributes.

The specific transformations and their corresponding hyperparameters are listed in Tab.~\ref{tab:augmentations}. For each view, transformations are applied sequentially with probability $p$.

\begin{table}
    \centering
    \caption{Augmentation hyper-parameters used for Invariant Compression. The transformations are applied sequentially to generate diverse views.}
    \label{tab:augmentations}
    \vspace{0.1in}
    \begin{tabular}{l|c|l}
        \toprule
        \textbf{Transformation} & \textbf{Prob. ($p$)} & \textbf{Parameters} \\
        \midrule
        Random Resized Crop & 1.0 & Scale $\in [0.2, 1.0]$, Ratio $\in [3/4, 4/3]$ \\
        Random Horizontal Flip & 0.5 & -- \\
        Color Jitter & 0.5 & Brightness: 0.4, Contrast: 0.4, \\
                     &     & Saturation: 0.4, Hue: 0.1 \\
        Random Grayscale & 0.2 & -- \\
        Gaussian Blur & 0.2 & Kernel size $\in [0.1, 2.0]$ \\
        \bottomrule
    \end{tabular}
\end{table}

\section{Additional Experiment Results}
\label{sec:additional results}
\subsection{Full Benchmark Comparisons}
In this section, we provide a more comprehensive evaluation of different methods across a wide range of datasets and incremental settings. Specifically, we present the detailed performance curves for Aircraft, Cars, and CIFAR in \cref{fig:res_group1}, CUB, Food, and ImageNet-R in \cref{fig:res_group2}, and ObjectNet, SUN, and UCF101 in \cref{fig:res_group3}. These results complement the quantitative summaries provided in the main paper. As observed, \momo\ consistently outperforms other methods on different datasets and data splits (e.g., Base-0 vs. Base-50/100) by a substantial margin. We attribute this sustained performance to the synergistic design of our framework: 
1) \textbf{Geometry-Aware Anchoring}: By employing \emph{Principal Geodesic Analysis}, we respect the intrinsic hyperspherical structure of CLIP embeddings, ensuring that class prototypes remain stable and drift-free; 
2) \textbf{Robust Evidence Aggregation}: The \emph{Variational Information Bottleneck} objective acts as a semantic filter, suppressing task-specific shortcuts and noise through interventional regularization; 
and 3) \textbf{Distributional Task Routing}: The \emph{Optimal Transport} mechanism replaces brittle point-estimates with global distributional matching, allowing for accurate expert selection even when task boundaries are ambiguous or overlapping.

\begin{figure*}
	\centering
	\begin{subfigure}{0.3\linewidth}
		\centering
		\includegraphics[width=1\columnwidth]{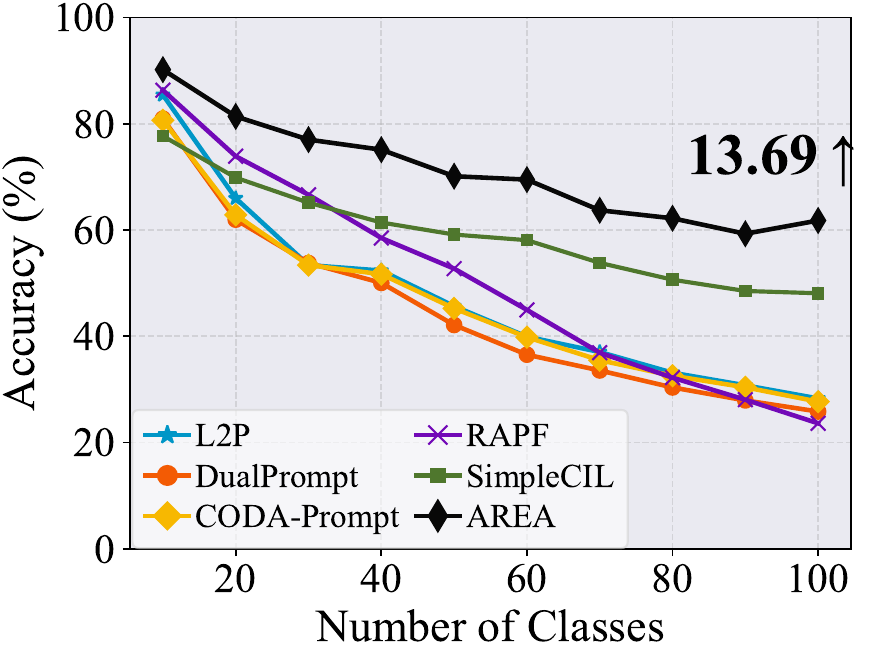}
		\captionsetup{justification=centering}\caption{\fontsize{12}{14}\selectfont Aircraft Base0 Inc10}
	\end{subfigure}
	\hfill
	\begin{subfigure}{0.3\linewidth}
		\centering
		\includegraphics[width=1\columnwidth]{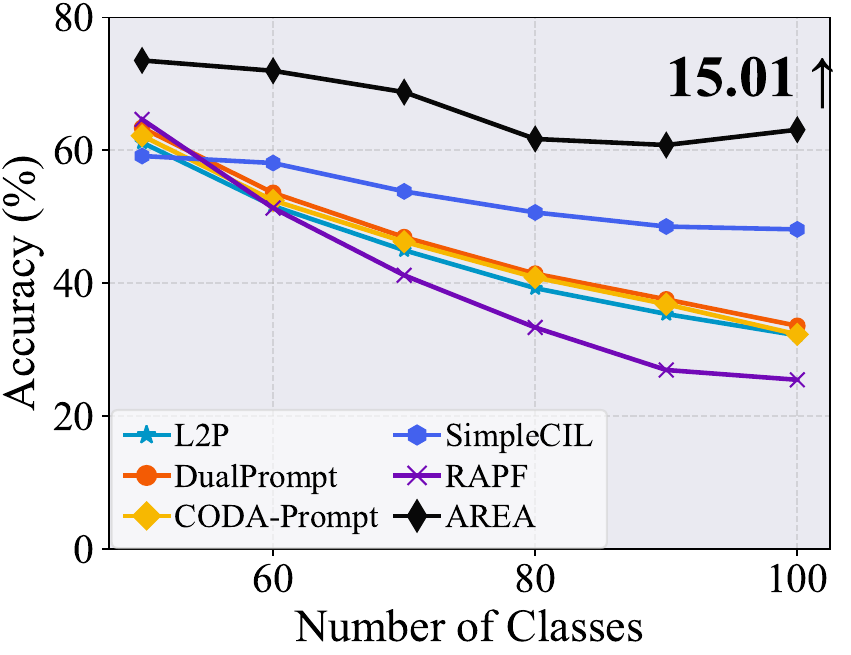}
		\captionsetup{justification=centering}\caption{\fontsize{12}{14}\selectfont Aircraft Base50 Inc10}
	\end{subfigure}
	\hfill
	\begin{subfigure}{0.3\linewidth}
		\centering
		\includegraphics[width=1\columnwidth]{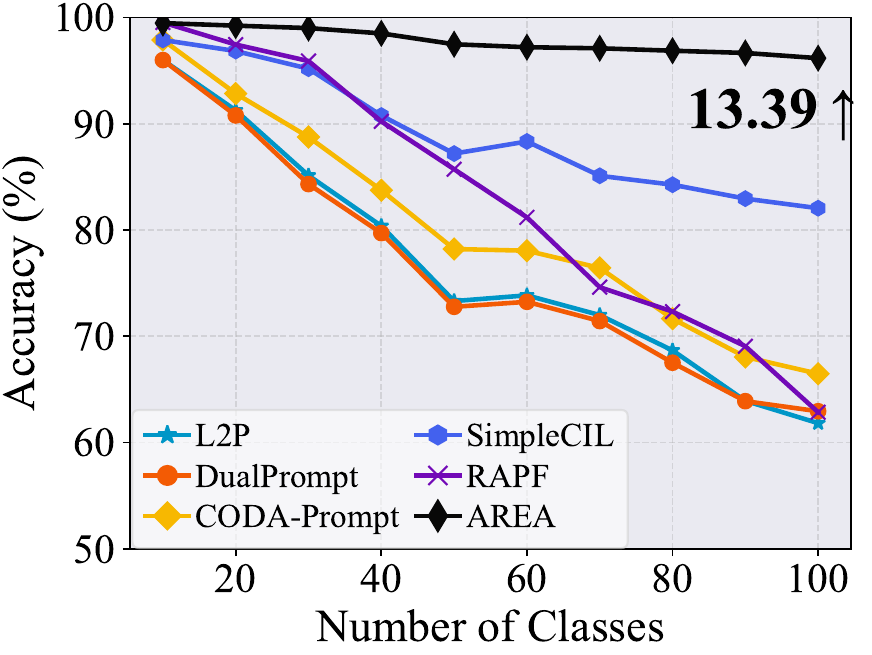}
		\captionsetup{justification=centering}\caption{\fontsize{12}{14}\selectfont Cars Base0 Inc10}
	\end{subfigure}
	
	\vspace{1em}
	
	\begin{subfigure}{0.3\linewidth}
		\centering
		\includegraphics[width=1\columnwidth]{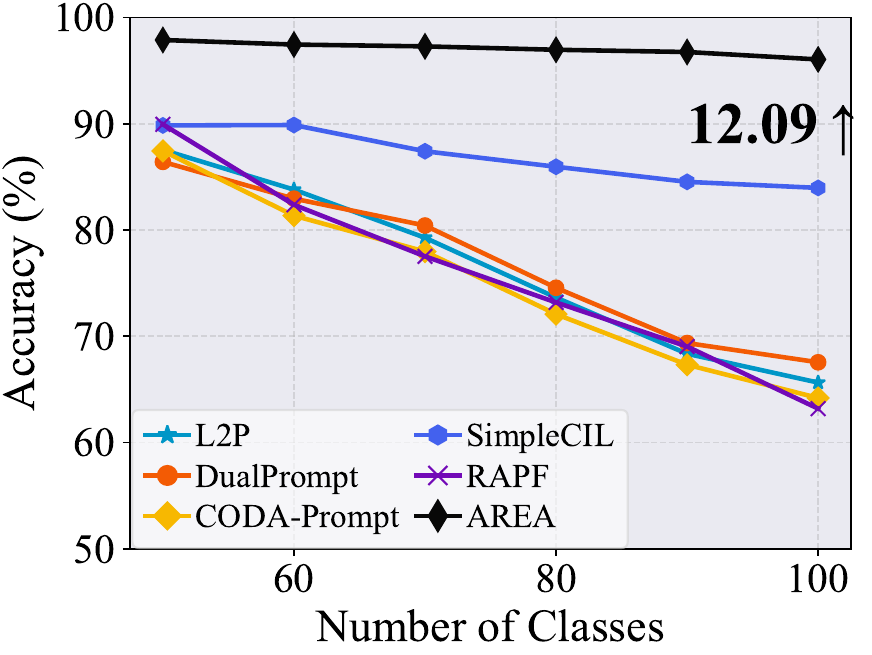}
		\captionsetup{justification=centering}\caption{\fontsize{12}{14}\selectfont Cars Base50 Inc10}
	\end{subfigure}
	\hfill
	\begin{subfigure}{0.3\linewidth}
		\centering
		\includegraphics[width=1\columnwidth]{figs/cifar.pdf}
		\captionsetup{justification=centering}\caption{\fontsize{12}{14}\selectfont CIFAR Base0 Inc10}
	\end{subfigure}
	\hfill
	\begin{subfigure}{0.3\linewidth}
		\centering
		\includegraphics[width=1\columnwidth]{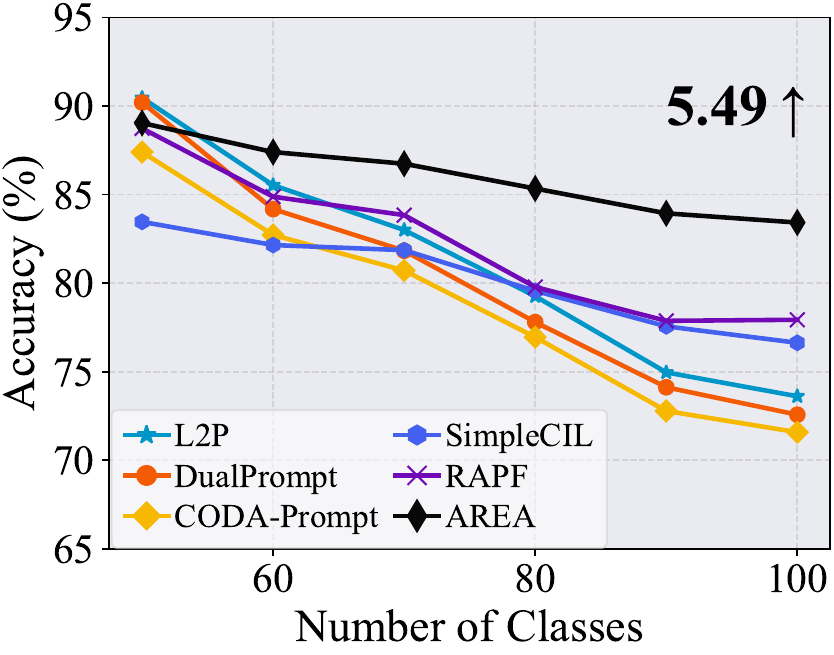}
		\captionsetup{justification=centering}\caption{\fontsize{12}{14}\selectfont CIFAR Base50 Inc10}
	\end{subfigure}
	
	\caption{Incremental performance of different methods. We report the performance gap on Aircraft, Cars, and CIFAR datasets (Base0 and Base50 settings) after the last incremental stage of \momo\ and the runner-up method. All methods utilize the same pre-trained weights.}
	\label{fig:res_group1}
\end{figure*}

\begin{figure*}
	\centering
	\begin{subfigure}{0.3\linewidth}
		\centering
		\includegraphics[width=1\columnwidth]{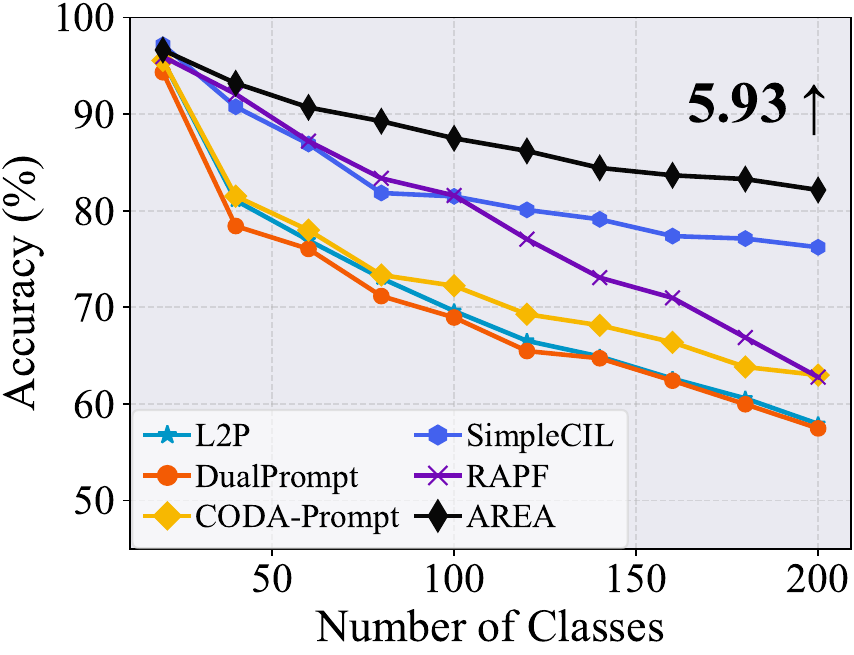}
		\captionsetup{justification=centering}\caption{\fontsize{12}{14}\selectfont CUB Base0 Inc20}
	\end{subfigure}
	\hfill
	\begin{subfigure}{0.3\linewidth}
		\centering
		\includegraphics[width=1\columnwidth]{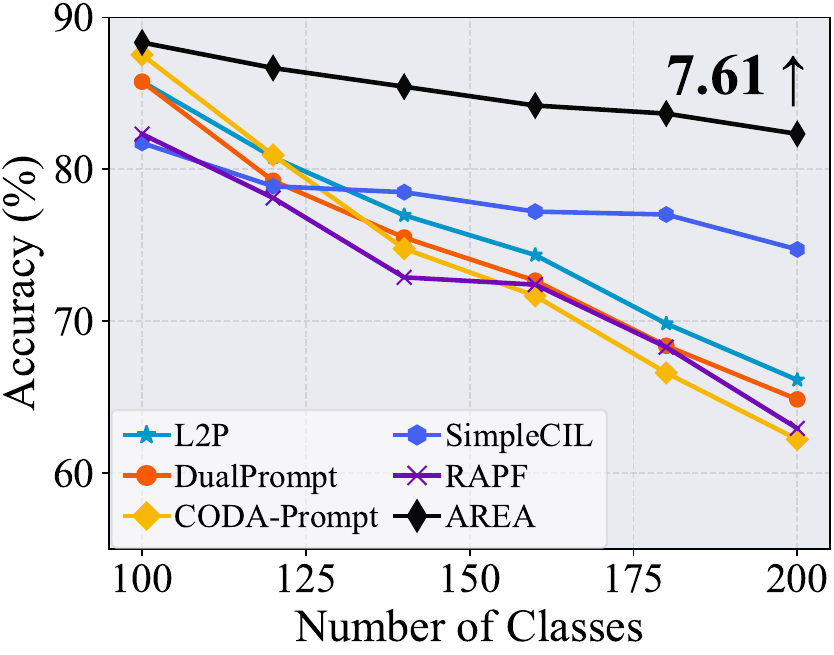}
		\captionsetup{justification=centering}\caption{\fontsize{12}{14}\selectfont CUB Base100 Inc20}
	\end{subfigure}
	\hfill
	\begin{subfigure}{0.3\linewidth}
		\centering
		\includegraphics[width=1\columnwidth]{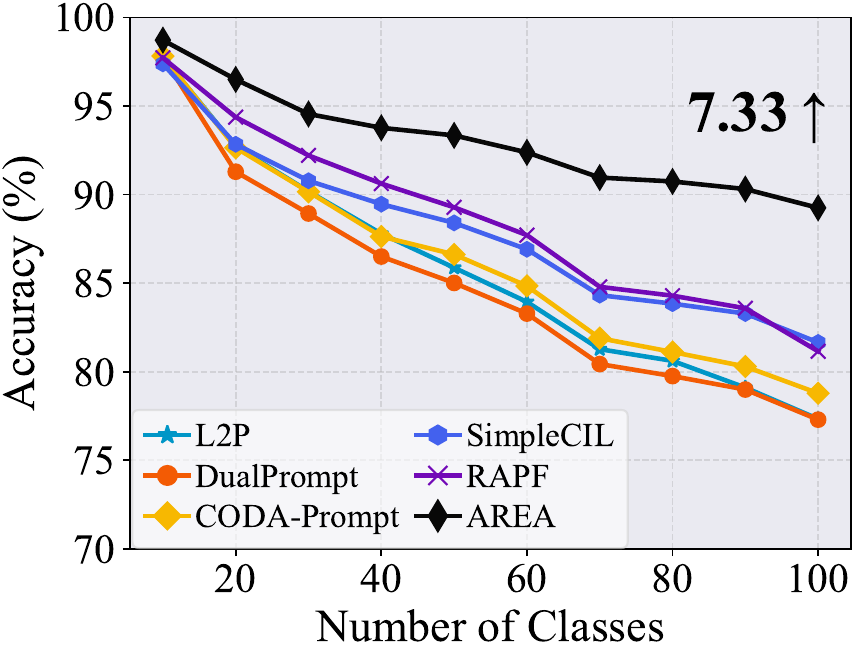}
		\captionsetup{justification=centering}\caption{\fontsize{12}{14}\selectfont Food Base0 Inc10}
	\end{subfigure}
	
	\vspace{1em}
	
	\begin{subfigure}{0.3\linewidth}
		\centering
		\includegraphics[width=1\columnwidth]{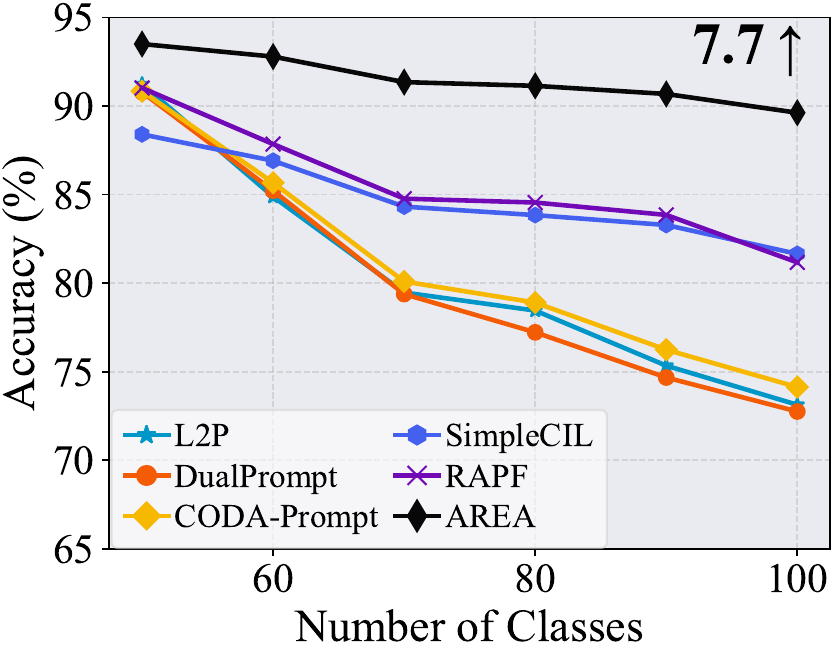}
		\captionsetup{justification=centering}\caption{\fontsize{12}{14}\selectfont Food Base50 Inc10}
	\end{subfigure}
	\hfill
	\begin{subfigure}{0.3\linewidth}
		\centering
		\includegraphics[width=1\columnwidth]{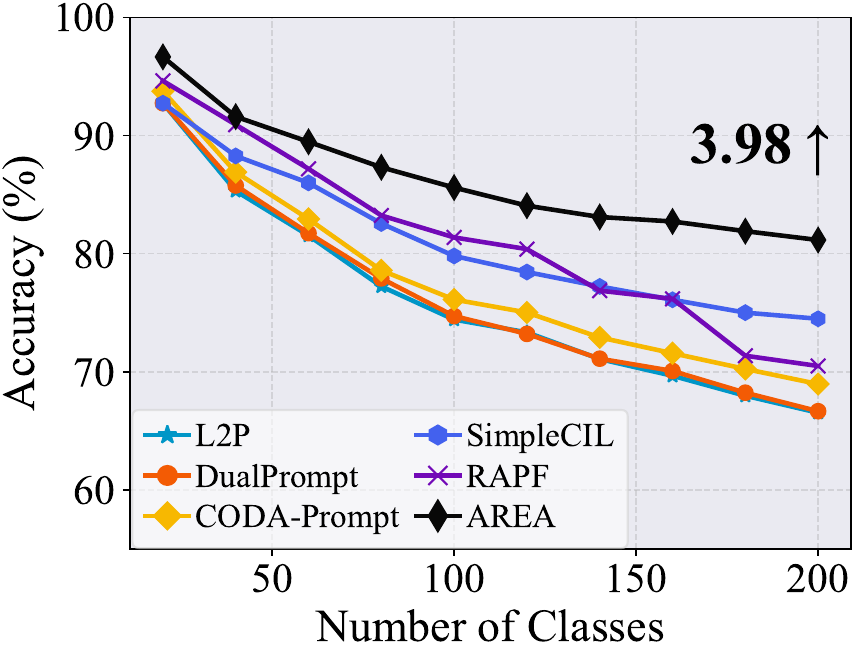}
		\captionsetup{justification=centering}\caption{\fontsize{12}{14}\selectfont ImageNet-R Base0 Inc20}
	\end{subfigure}
	\hfill
	\begin{subfigure}{0.3\linewidth}
		\centering
		\includegraphics[width=1\columnwidth]{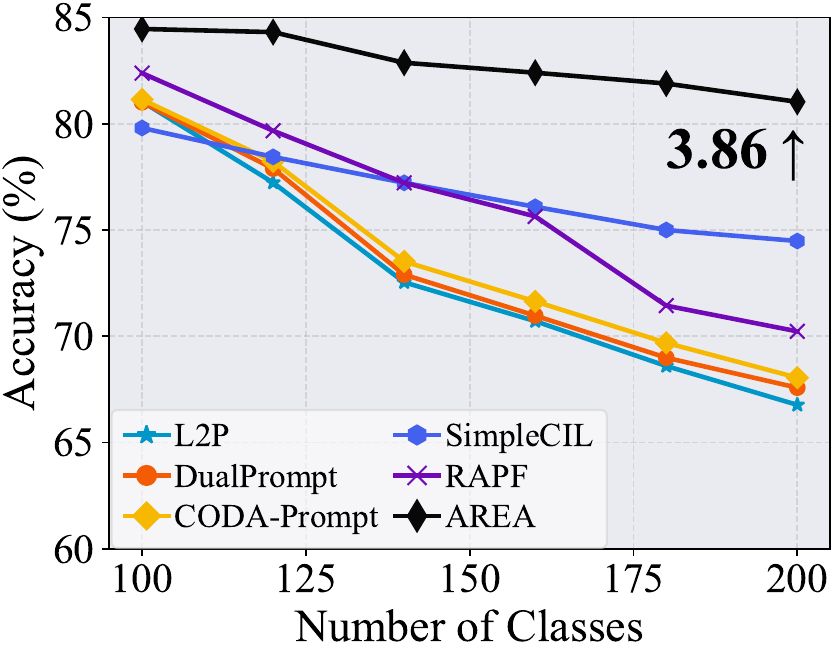}
		\captionsetup{justification=centering}\caption{\fontsize{12}{14}\selectfont ImageNet-R Base100 Inc20}
	\end{subfigure}
	
	\caption{Incremental performance of different methods on CUB, Food-101, and ImageNet-R. We compare different base initialization settings (Base0 vs. Base100/50).}
	\label{fig:res_group2}
\end{figure*}

\begin{figure*}
	\centering
	\begin{subfigure}{0.3\linewidth}
		\centering
		\includegraphics[width=1\columnwidth]{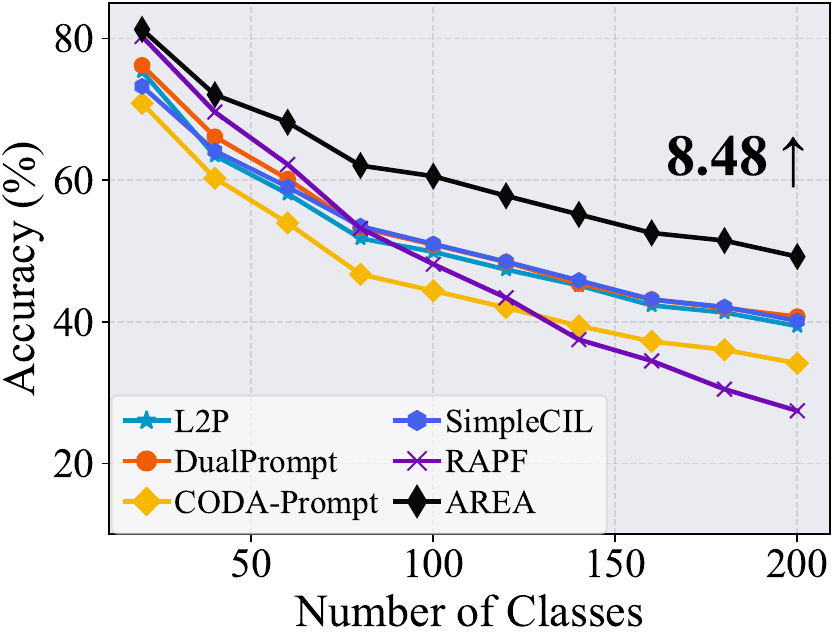}
		\captionsetup{justification=centering}\caption{\fontsize{12}{14}\selectfont ObjectNet Base0 Inc20}
	\end{subfigure}
	\hfill
	\begin{subfigure}{0.3\linewidth}
		\centering
		\includegraphics[width=1\columnwidth]{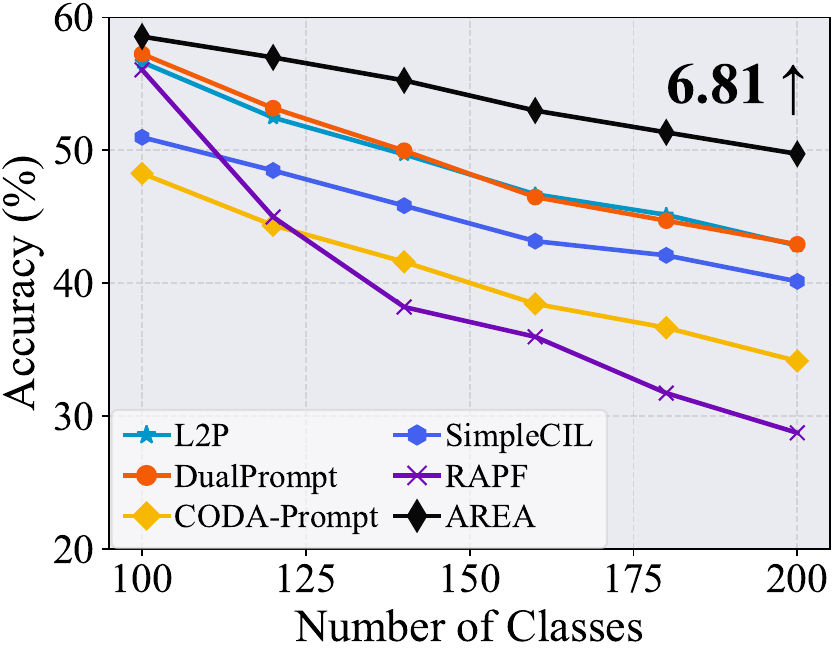}
		\captionsetup{justification=centering}\caption{\fontsize{12}{14}\selectfont ObjectNet Base100 Inc20}
	\end{subfigure}
	\hfill
	\begin{subfigure}{0.3\linewidth}
		\centering
		\includegraphics[width=1\columnwidth]{figs/sun.pdf}
		\captionsetup{justification=centering}\caption{\fontsize{12}{14}\selectfont SUN Base0 Inc30}
	\end{subfigure}
	
	\vspace{1em}
	
	\begin{subfigure}{0.3\linewidth}
		\centering
		\includegraphics[width=1\columnwidth]{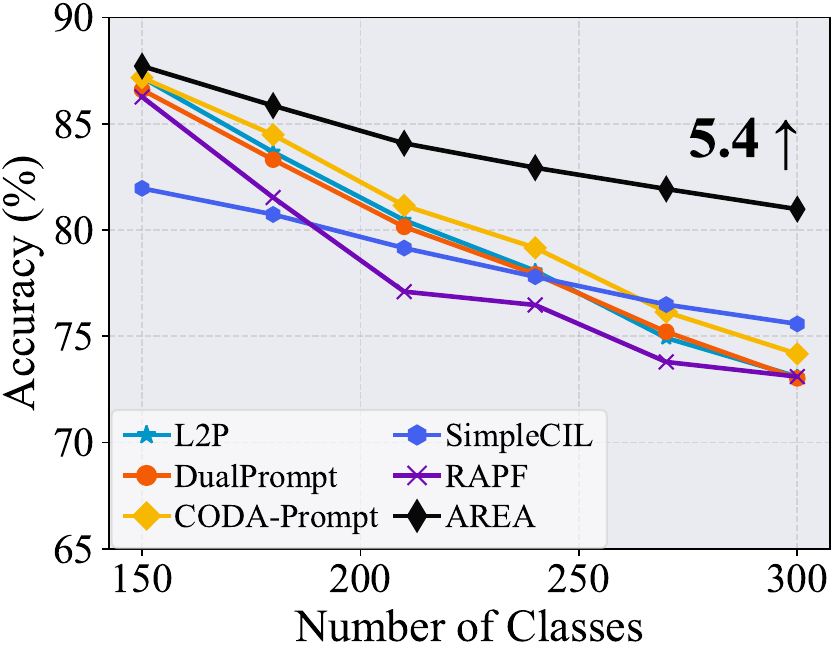}
		\captionsetup{justification=centering}\caption{\fontsize{12}{14}\selectfont SUN Base150 Inc30}
	\end{subfigure}
	\hfill
	\begin{subfigure}{0.3\linewidth}
		\centering
		\includegraphics[width=1\columnwidth]{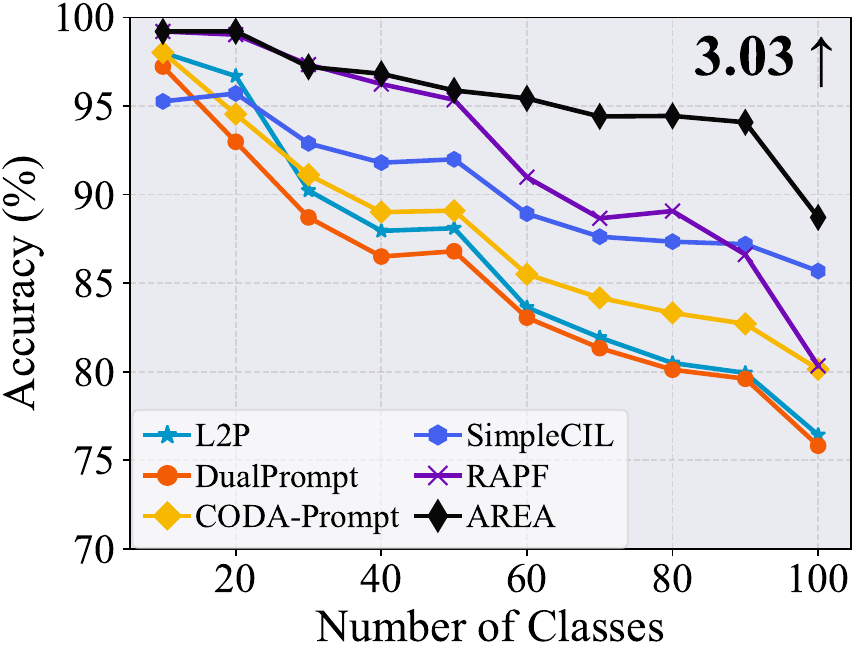}
		\captionsetup{justification=centering}\caption{\fontsize{12}{14}\selectfont UCF101 Base0 Inc10}
	\end{subfigure}
	\hfill
	\begin{subfigure}{0.3\linewidth}
		\centering
		\includegraphics[width=1\columnwidth]{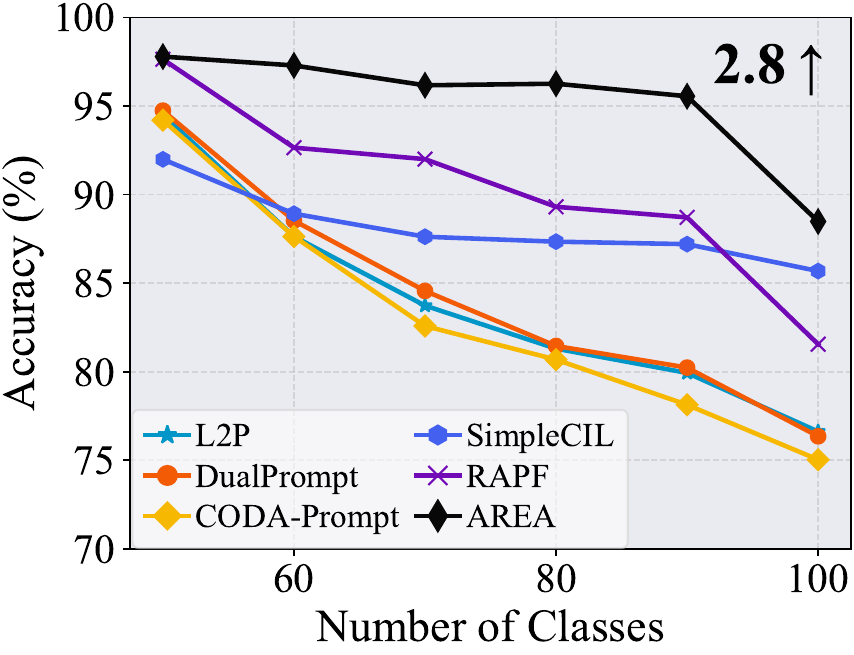}
		\captionsetup{justification=centering}\caption{\fontsize{12}{14}\selectfont UCF101 Base50 Inc10}
	\end{subfigure}
	
	\caption{Incremental performance on ObjectNet, SUN397, and UCF101. The figures illustrate the effectiveness of \momo\ across various domain shifts and task lengths.}
	\label{fig:res_group3}
\end{figure*}
\subsection{Robustness to Different Random Seeds}
In Class-Incremental Learning (CIL), stochastic factors—such as the ordering of incoming classes—can significantly impact model performance. To evaluate the stability of our method, we conducted experiments using five distinct random seeds in CIFAR-100 B0 Inc10. The results with the average accuracy and standard deviation, presented in Fig.~\ref{fig:seed}, demonstrate that \momo\ consistently delivers high performance across different random initializations with minimal standard deviation. This confirms the strong robustness of \momo\ against stochastic variations inherent in the incremental learning process.

\begin{figure*}[t]
    \centering
    \begin{subfigure}{0.48\linewidth}
        \centering
        \includegraphics[width=\linewidth]{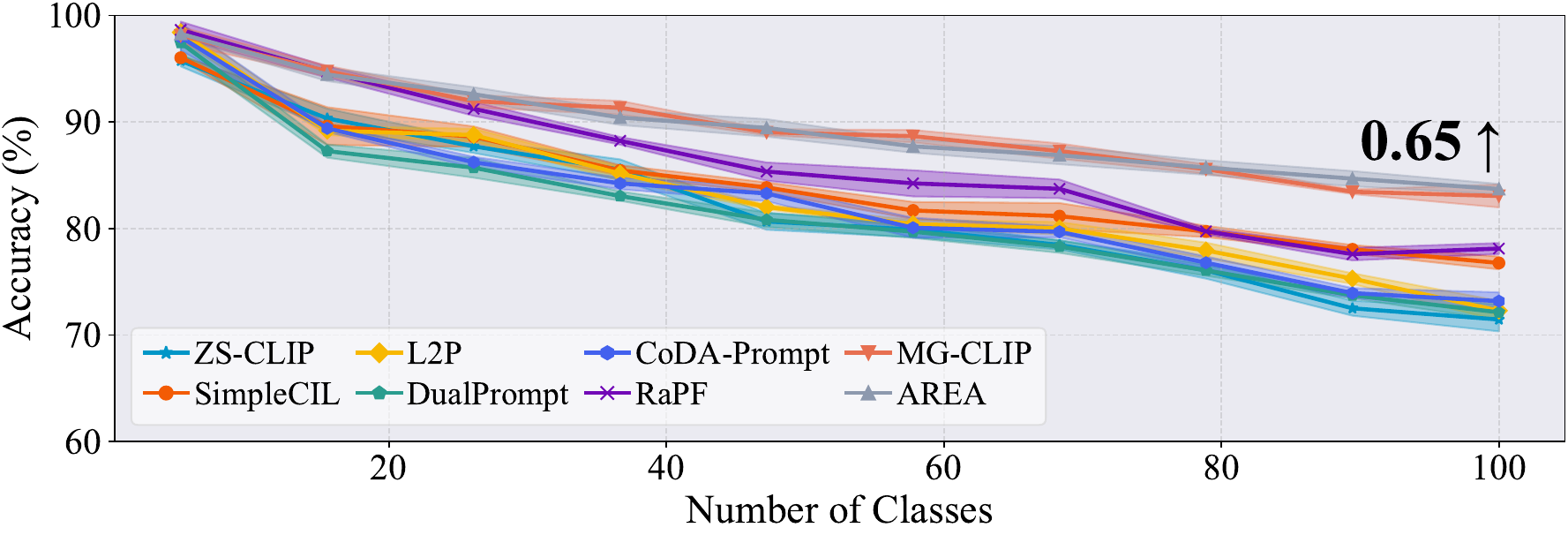}
        \caption{Seed consistency.}
        \label{fig:seed}
    \end{subfigure}
    \hfill
    \begin{subfigure}{0.48\linewidth}
        \centering
        \includegraphics[width=\linewidth]{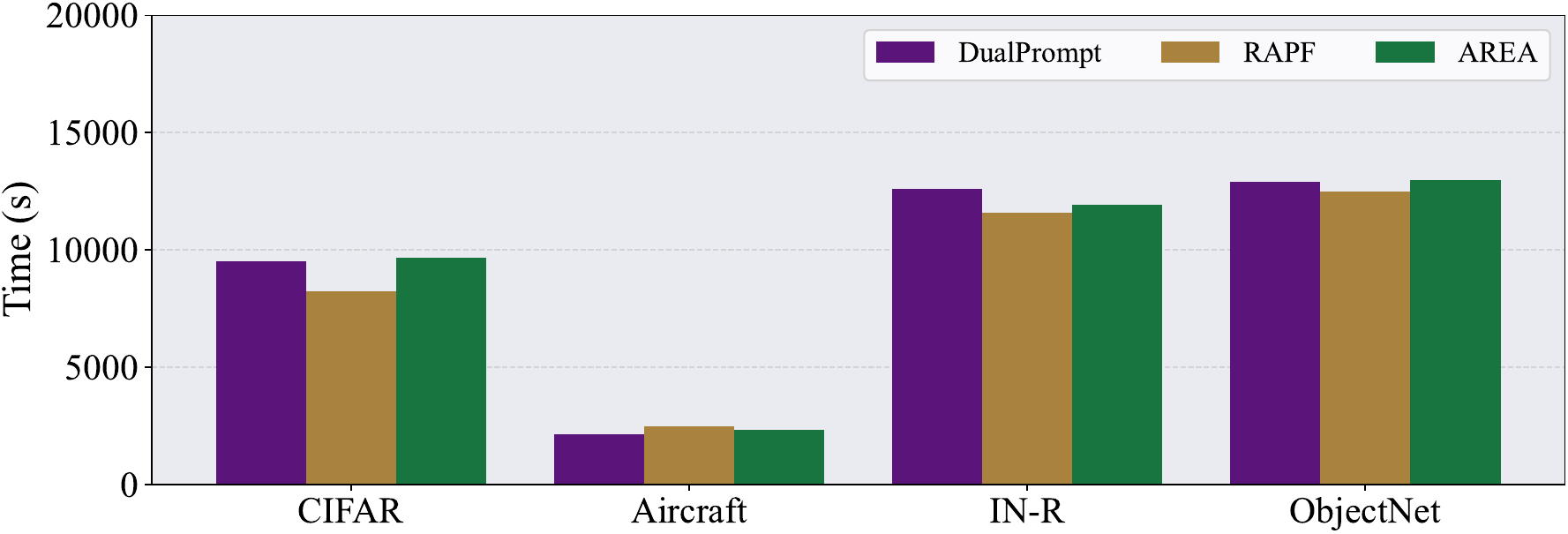}
        \caption{Runtime consumption.}
        \label{fig:runtime}
    \end{subfigure}
    \caption{Additional analysis of AREA. Left: performance consistency across five random seeds on CIFAR-100 B0 Inc10, where the shaded area denotes the standard deviation. Right: runtime comparison across different settings.}
    \label{fig:seed_runtime}
\end{figure*}

\subsection{Runtime Consumption}
In this section, we present a detailed running time comparison of our method, AREA, against DualPrompt and RAPF across the CIFAR, Aircraft, ImageNet-R, and ObjectNet datasets. All experiments were conducted on a single NVIDIA 4090 GPU. As illustrated in Fig.~\ref{fig:runtime}, AREA exhibits a marginal increase in training duration compared to the baselines. This slight overhead is primarily attributed to the additional computations required for PGA-based Attribute Extraction and the optimization process in OT-based Task Selection. However, considering the significant performance improvements achieved by these components, we argue that this modest increase in computational cost is well-justified. AREA maintains a favorable trade-off between efficiency and accuracy, remaining practically applicable for real-world incremental learning scenarios.
\subsection{Trainable Parameters}
In this section, we further report the number of trainable parameters in each compared methods in Tab.~\ref{tab:params}. As can be inferred from the results, \momo's trainable parameter quantity has the same scale as other compared methods, while \momo\ can achieve a better performance.

\begin{table*}[t]
\centering
\scriptsize
\renewcommand{\arraystretch}{1.1}
\setlength{\extrarowheight}{1pt}
\caption{Number of trainable parameters on CIFAR100 B0 Inc10 setting.}
\label{tab:params}
\resizebox{0.5\linewidth}{!}{
\begin{tabular}{l|c}
\hline
\rowcolor{gray!20}
\textbf{Methods} & \textbf{Trainable Parameters} \\
\hline
L2P~\citep{wang2022learning} & 0.16 M \\
DualPrompt~\citep{DBLP:conf/eccv/0002ZESZLRSPDP22} & 0.33 M \\
CODA-Prompt~\citep{DBLP:conf/cvpr/SmithKGCKAPFK23} & 3.92 M \\
RAPF~\citep{huang2024class} & 0.26 M \\
MG-CLIP~\citep{Huang_2025_ICCV} & 0.58 M \\
\hline
\rowcolor{LightPurple}
\textbf{\momo} & 0.52 M \\
\hline
\end{tabular}}
\end{table*}

\section{Dependence on MLLM Annotation}
\label{sec:mllm_dependency}

We further analyze whether \momo\ strongly depends on powerful MLLMs and whether the noisy generated descriptions may affect performance. 
In addition to using GPT-5 captions, we evaluate a weaker open-source captioner, LLaVA-7B, and vary the proportion of captioned training samples. 
The comparison is reported in Tab.~\ref{tab:mllm_dependency}.

\begin{table}[t]
\centering
\caption{Effect of caption source and caption coverage on final accuracy.}
\label{tab:mllm_dependency}
\resizebox{0.55\linewidth}{!}{
\begin{tabular}{lcc}
\toprule
Caption source / coverage & CIFAR-100 & CUB-200 \\
\midrule
Template only & 78.40 & 73.20 \\
LLaVA-7B, 20\% samples captioned & 81.36 & 80.84 \\
LLaVA-7B, 100\% samples captioned & 82.84 & 81.21 \\
GPT-5, 100\% samples captioned & 83.69 & 82.14 \\
MG-CLIP & 82.78 & 64.25 \\
\bottomrule
\end{tabular}
}
\end{table}

As shown in Tab.~\ref{tab:mllm_dependency}, stronger MLLMs provide the best performance, but \momo\ is not overly dependent on a specific powerful captioner. 
Replacing GPT-5 with LLaVA-7B only leads to a small accuracy drop, and using LLaVA-7B captions for only 20\% of the training samples still yields competitive results. 
Compared with template-only textual supervision, generated descriptions consistently improve performance, especially on fine-grained datasets such as CUB-200.

This robustness comes from the way \momo\ uses generated descriptions. 
Captions are not directly used as hard labels or final decision rules; instead, they are used to construct a textual attribute subspace. 
The final prediction is still based on coupled visual-textual aggregation and OT-based routing. 
Therefore, occasional caption noise can be absorbed by the attribute-level representation and has limited influence on the final classifier. 
These results indicate that \momo\ is both annotation-efficient and robust to moderate caption noise.

\section{Sensitivity to the Number of Attributes}
\label{sec:attribute_number}

We also study how the number of selected attributes affects performance. 
In practice, we use a shared moderate value of $K$ across datasets, selected by validation. 
Tab.~\ref{tab:attribute_number} reports the sensitivity analysis on CIFAR-100 and ObjectNet.

\begin{table}[t]
\centering
\caption{Sensitivity analysis of the number of attributes $K$.}
\label{tab:attribute_number}
\resizebox{0.24\linewidth}{!}{
\begin{tabular}{ccc}
\toprule
$K$ & CIFAR-100 & ObjectNet \\
\midrule
2  & 78.13 & 46.09 \\
4  & 81.81 & 48.66 \\
8  & 83.69 & 49.73 \\
16 & 83.72 & 48.86 \\
\bottomrule
\end{tabular}
}
\end{table}

As shown in Tab.~\ref{tab:attribute_number}, performance improves when $K$ increases from a small value to a moderate one, and then quickly saturates. 
On CIFAR-100, increasing $K$ from 8 to 16 brings only a marginal improvement, while on ObjectNet the performance slightly decreases when using $K=16$. 
This suggests that too few attributes may be insufficient to describe class-level semantics, whereas too many attributes can introduce redundant or noisy directions. 
Based on this trend, we use $K=8$ by default, which provides a good trade-off between expressivity and efficiency across datasets with different levels of granularity.

Adaptive selection of $K$ is a promising future direction. 
For example, one could allocate more attributes to visually diverse or fine-grained classes and fewer attributes to simpler classes. 
We leave this adaptive attribute allocation strategy for future work.

\paragraph{Limitations.}
Compared with replay-based class-incremental learning methods, \momo\ is more privacy-friendly because it is fully compatible with the exemplar-free setting and does not store old samples. 
Some strong prior methods, such as PROOF, require preserving exemplars from previous tasks, which may expose sensitive user data in real applications. 
In contrast, \momo\ avoids this issue by learning without retaining past samples. 
For more specialized settings such as long-tail, few-shot, or class-imbalanced continual learning, we do not include dedicated experiments in this work since they are beyond the main scope of this paper. 
Nevertheless, as \momo\ improves representation learning and decision robustness through occlusion-based intervention and multi-view augmentation, it has the potential to remain effective under these challenging scenarios. 
A more thorough study of these settings is left as future work.

\section{Necessity of PGA and OT-weighted Mixture Prediction}
\label{sec:pga_mixture_ablation}

We further ablate two key components of \momo: the geometry-aware PGA attribute construction and the OT-weighted mixture prediction in Eq.~\eqref{eq:class_pred}. 
These two components are complementary. 
PGA constructs stable class-level anchors on the hyperspherical CLIP embedding space, while the OT-weighted mixture prediction improves inference robustness when task manifolds partially overlap. 
Replacing PGA with Euclidean PCA ignores the intrinsic spherical geometry of normalized CLIP features. 
Similarly, replacing the soft mixture with a single-task or unweighted prediction rule makes inference more brittle near task boundaries.

Following this motivation, we evaluate several variants using two metrics: the semantic similarity between learned anchors and ground-truth attribute descriptions on Aircraft, and the final accuracy on CIFAR-100. 
The results are shown in Tab.~\ref{tab:pga_mixture_ablation}.

\begin{table}[t]
\centering
\caption{Ablation study on PGA and OT-weighted mixture prediction.}
\label{tab:pga_mixture_ablation}
\resizebox{0.6\linewidth}{!}{
\begin{tabular}{lcc}
\toprule
Variant & Aircraft attr.-text sim. & CIFAR-100 final acc. \\
\midrule
PCA + full inference & 0.019 & 80.96 \\
PGA + sim only & 0.026 & 81.51 \\
PGA + single-task only & 0.026 & 82.24 \\
\momo\ full & 0.026 & 83.69 \\
\bottomrule
\end{tabular}
}
\end{table}

As shown in Tab.~\ref{tab:pga_mixture_ablation}, replacing PGA with PCA reduces both semantic alignment and final continual learning accuracy, demonstrating the importance of respecting the hyperspherical geometry of CLIP representations. 
The two simplified inference variants also underperform the full model. 
Using similarity-only prediction removes the routing confidence provided by OT, while single-task prediction discards useful collaboration among task-specific experts. 
In contrast, the full Eq.~\eqref{eq:class_pred} leverages soft task weights to aggregate predictions across experts, which is especially beneficial when the test sample lies near overlapping task manifolds. 
These results confirm that both PGA and the OT-weighted mixture prediction are necessary for the final performance of \momo.

\section{Additional Discussion on Future Directions}
Although \momo\ is currently instantiated for CLIP-based CIL, it may be extended in several directions. For tasks with larger modality gaps, future work could combine \momo\ with unified multimodal encoders such as Meta-Transformer~\citep{zhang2023meta}. For long-tailed, few-shot, or imbalanced continual learning, the masking and augmentation strategies may need to be adapted to avoid overfitting scarce classes. Finally, the ideas of attribute extraction, aggregation, and OT-based routing could be explored in larger MLLM continual learning frameworks, such as Mixture-of-LoRA-Experts.

\section{A Comprehensive Study on PTM-based Methods}
\label{sec:comprehensive}
This section summarizes the PTM-based Continual Learning approaches compared in our main paper. For a fair evaluation, all methods are implemented on top of the same frozen pre-trained backbone, and differ only in how they adapt or condition the model across tasks. The compared approaches are briefly described below:
\begin{itemize}
\item \textbf{L2P}\citep{wang2022learning}: This method proposes a prompt-pool mechanism for continual learning with a fixed backbone. For each incoming task, the model retrieves a small set of prompts via instance-wise querying, and uses these prompts to modulate the backbone while keeping its parameters unchanged. It operates without rehearsal and does not assume task identity at test time, making it effective in task-agnostic incremental scenarios.
\item \textbf{DualPrompt}\citep{DBLP:conf/eccv/0002ZESZLRSPDP22}: This method extends prompt pooling by disentangling prompts into shared (task-invariant) prompts and task-specific prompts. The two prompt types provide complementary conditioning signals, enabling rehearsal-free adaptation while preserving prior knowledge, and yielding strong performance in class-incremental settings.
\item \textbf{CODA-Prompt}\citep{DBLP:conf/cvpr/SmithKGCKAPFK23}: This method introduces a decomposed, attention-driven prompt construction strategy. Instead of selecting fixed prompts, it learns prompt components and composes them dynamically per input, improving adaptability to new classes while maintaining stability for previously learned classes.
\item \textbf{SimpleCIL}\citep{zhou2023revisiting}: This method re-examines class-incremental learning with large pre-trained models, emphasizing generalization and adaptation as key factors. It provides simplified yet competitive baselines that serve as strong reference points in the PTM regime.
\item \textbf{ZS-CLIP}\citep{radford2021learning}: Although originally designed for zero-shot recognition, ZS-CLIP is included as a robust reference for incremental evaluation. It exploits CLIP’s vision-language alignment to recognize novel classes without additional training.
\item \textbf{RAPF}\citep{huang2024class}: This method presents a CLIP-based class-adaptive prompt fusion scheme, where prompts for new classes are adapted and fused with existing prompts to accommodate sequential class acquisition.
\item \textbf{MG-CLIP}~\citep{Huang_2025_ICCV}: This method develops a multi-granularity incremental framework for CLIP that jointly models class-level and instance-level changes, and mitigates modality mismatch when classes are introduced over time.
\end{itemize}

\end{document}